\newtheorem{theorem}{Theorem}[section]
\newtheorem{lemma}[theorem]{Lemma}
\newtheorem{definition}[theorem]{Definition}
\newtheorem{assumption}[theorem]{Assumption}
\newtheorem{remark}[theorem]{Remark}
\newcommand{\wh}{\widehat}
\newcommand{\wt}{\widetilde}
\newcommand{\eps}{\epsilon}
\newcommand{\R}{\mathbb{R}}
\renewcommand{\varepsilon}{\epsilon}
\renewcommand{\tilde}{\wt}
\renewcommand{\hat}{\wh}
\renewcommand{\eps}{\epsilon}
\newcommand{\W}{\mathsf{W}}
\newcommand{\V}{\mathsf{V}}
\newcommand{\rand}{\mathsf{rand}}
\newcommand{\column}{\mathsf{column}}
\newcommand{\row}{\mathsf{row}}
\newcommand{\DPGD}{\mathsf{DPGrad}}
\newcommand{\D}{\mathcal{D}}
\DeclareMathOperator*{\E}{{\mathbb{E}}}
\DeclareMathOperator{\init}{init}
\newcommand{\Andrej}[1]{{\color{red}[Andrej: #1]}}
\newcommand*{\RN}[1]{\expandafter\@slowromancap\romannumeral #1@}
\title{Continual learning: a feature extraction formalization, \\
an efficient algorithm, and fundamental obstructions}
\author{
    Binghui Peng \\
    Columbia University \\
    \texttt{bp2601@columbia.edu} \\
    \and
    Andrej Risteski \\
    Carnegie Mellon University \\
    \texttt{aristesk@andrew.cmu.edu} \\}
\begin{document} 
\maketitle

\begin{abstract}%

Continual learning is an emerging paradigm in machine learning, wherein a model is exposed in an online fashion to data from multiple different distributions (i.e. environments), and is expected to adapt to the distribution change. Precisely, the goal is to perform well in the new environment, while simultaneously retaining the performance on the previous environments (i.e. avoid ``catastrophic forgetting'')---without increasing the size of the model.

While this setup has enjoyed a lot of attention in the applied community, there hasn't be theoretical work that even formalizes the desired guarantees. In this paper, we propose a framework for continual learning through the framework of feature extraction---namely, one in which features, as well as a classifier, are being trained with each environment. When the features are linear, we design an efficient gradient-based algorithm $\DPGD$, that is guaranteed to perform well on the current environment, as well as avoid catastrophic forgetting. In the general case, when the features are non-linear, we show such an algorithm cannot exist, whether efficient or not.
\end{abstract}

\section{Introduction}
\label{sec:intro}

In the last few years, there has been an increasingly large focus in the modern machine learning community on settings which go \emph{beyond iid data}. This has resulted in the proliferation of new concepts and settings such as out-of-distribution generalization \citep{hendrycks2021many},  domain generalization \citep{blanchard2011generalizing}, multi-task learning \citep{zhang2018overview}, continual learning \citep{parisi2019continual} and etc. 
{\em Continual learning}, which is the focus of this paper, concerns learning through a sequence of environments, with the hope of retaining old knowledge while adapting to new environments.

Unfortunately, despite a lot of interest in the applied community---as evidenced by a multitude of NeurIPS and ICML workshops \citep{workshop2018, workshop2020, workshop2021}---approaches with formal theoretical guarantees are few and far between. 
The main reason, similar encountered as its cousin fields like out-of-distribution generalization or multi-tasks learning, usually come with some ``intuitive'' desiderata --- but no formal definitions. 
What's worse, it's often times clear that without strong data assumptions---the problem is woefully ill-defined. 

The intuitive desiderata the continual learning community has settled on is that the setting involves cases where an algorithm is exposed (in an online fashion) to data sequentially coming from different distributions (typically called ``environments'', inspired from a robot/agent interacting with different environments). Moreover, the goal is to keep the size of the model being trained fixed, and make sure the model performs well on the current environment \emph{while simultaneously} maintaining a good performance in the previously seen environments. In continual learning parlance, this is termed ``resistance to catastrophic forgetting''. 

It is clear that some of the above desiderata are shared with well-studied learning theory settings (e.g. online learning, lifelong learning), while some aspects differ. For example, in online learning, we don't care about catastrophic forgetting (or we only do so in some averaged sense); in lifelong learning, it's not necessary to keep the size of the model fixed.
It is also clear that absent some assumptions on the data and the model being trained, these desiderata cannot possibly be satisfied: why would there even exist a model of some fixed size that performs well on both past environments, and current ones --- let alone one that gets updated in an online fashion.  

\paragraph{A feature-extraction formalization of continual learning:} Our paper formalizes a setting for continual learning through the lens of \emph{feature extraction}: the model maintains a \emph{fixed} number of (trainable) features, as well as a \emph{linear classifier} on top of said features. The features are updated for every new environment, with the objective that the features are such that a good linear classifier exists for the \emph{new} environment, while the previously trained linear classifiers (on the updated features) are still good for the past environments. The reason the linear classifiers from previous rounds are not allowed to be updated is storage efficiency: in many settings, training data from previous environments is discarded, thus the linear classifier cannot be fine-tuned in subsequent rounds to the updated features. The number of features is kept fixed for the same reason: if we are to learn new features for every new environment, the model size (and hence storage requirements) would grow.

We prove two main results for our setting. 
\begin{enumerate} 
\item When the features are a linear function of the input data, and a good set of features exist, we design \emph{an efficient algorithm}, named doubly projected gradient descent, or $\DPGD$, that has a good accuracy on all environments, and resists catastrophic forgetting. Our algorithm, while being novel, bears some resemblance to a class of projection-based algorithms used in practice \citep{farajtabar2020orthogonal, chaudhry2020continual} --- and we hope some of the insights might port beyond the linear setting.

\item When the features are allowed to be a non-linear function of the input, perhaps surprisingly, continual learning is not possible in general. That is: even if a good set of features exists, the online nature of the setting, as well as the fact that the linear classifiers for past environments are not allowed to be updated, makes it possible for the algorithm to ``commit'' to linear classifiers, such that either catastrophic forgetting, or poor performance on the current environment has to occur. 
\end{enumerate}

\section{Our results}
\label{sec:result}

\subsection{Setup}
In a continual learning problem, the learner has sequential access to $k$ environments. In the $i$-th ($i \in [k]$) environment, the data is drawn i.i.d. from the underlying distribution $\D_i$ over $\R^{d} \times \R$, denoted as $(x, y) \sim \D_i$, where $x\in \R^{d}$ is the input and $y \in \R$ is the label. 
Motivated by the empirical success of representation learning \citep{bengio2013representation,devlin2019bert},
we formulate the continual learning problem through the feature extraction view: 
The learner is required to learn a common feature mapping (also known as representation function) $R: \R^{d} \rightarrow \R^{r}$ that maps the input data $x\in \R^{d}$ to a low dimensional representation $R(x) \in \R^{r}$ ($r \ll d$), together with a sequence of task-dependent linear classifiers (also known as linear prompts) $v_1, \ldots, v_k \in \R^{r}$ that place on top of the representation. That is to say, the prediction of the $i$-th environment is made by $f(x) = \langle v_i, R(x)\rangle$. 

As this is the first-cut study, we focus on the {\em realizable} and the {\em proper} learning setting.\footnote{It is possible to extend our algorithmic result to the non-realizable setting, when the label has symmetric sub-gaussian noise.} That is, we assume the existence of a feature mapping $R$ in the function class $\mathcal{H}$ (which is known in advance) and a sequence of linear predictor $v_1, \ldots, v_k$ such that for any $i \in [k]$ and any data $(x, y) \sim \D_i$, $y = \langle v_i, R(x)\rangle$ (realizable).
The learner is required to output a function $R$ that belongs to the hypothesis class $\mathcal{H}$ (proper).

\begin{remark}[Known environment identity]
Our model requires the knowledge of environment identity at test time, and thus can be classified into the category of incremental task learning.
We note there are also empirical research focusing on unknown environment identity, for which we left as future work (See Section \ref{sec:discussion}).
\end{remark}

The guarantee that we wish our learning algorithm to obtain is as follows:

\begin{definition}[Continual learning]
\label{def:prob-non-linear}
Let $d, k, r \in \mathbb{N}$, $r \ll d, k$, $\eps \in (0, 1/2)$. Let $\mathcal{H}$ be a function class consists of feature mapping from $\R^d$ to $\R^{r}$. The continual learning problem is defined over $k$ environments $\D_1, \ldots, \D_k$. There exists a function $R^{\star} \in \mathcal{H}$ and a sequence of linear classifiers $v_1^{\star}, \ldots, v_k^{\star} \in \R^{r}$ such that for any $(x, y) \sim \D_{i}$ ($i \in [k]$), the label satisfies $y = \langle v_i^{\star}, R^{\star}(x)\rangle$.

The continual learner has sequential access to environments $\D_1, \ldots, \D_k$ and can draw arbitrary many samples. The goal is to learn a representation function $R \in \mathcal{H}$ and a sequence of linear prompts $v_1, \ldots, v_k \in \R^{r}$ that achieve valid accuracy on the current task and do not suffer from catastrophic forgetting. Formally, in the $i$-th environment ($i \in [k]$), the learner optimizes the feature mapping $R$ and the linear classifier $v_i$ (without changing $v_1, \ldots, v_{i-1}$) and aims to satisfy
\begin{itemize}
\item {\bf Avoid catastrophic forgetting}: During the execution of the $i$-th task, the algorithm guarantees that 
\begin{align*}
L(R, v_j) := \frac{1}{2}\E_{(x, y)\sim \D_j}(\langle v_j, R(x)\rangle - y)^2 \leq \eps ~~ \text{ for all } j = 1, \ldots, i -1,
\end{align*}
\item {\bf Good accuracy on the current task:} At the end of $i$-th task, the algorithm guarantees that 
\begin{align*}
L(R, v_i) := \frac{1}{2}\E_{(x, y)\sim \D_i}(\langle v_i, R(x)\rangle  - y)^2 \leq \eps.
\end{align*}
\end{itemize}

For linear feature mapping, the representation function can be written in a linear form $R(x) = U^{\top}x$ for some $U \in \R^{d\times r}$, and it implies the $i$-th environment is generated by a linear model. That is, defining $w_i = Uv_i \in \R^{d}$, one can write $y = \langle v_i, U^{\top} x\rangle = \langle w_i, x\rangle$. 
\end{definition}

\begin{remark}[The benefit of continual learning with linear feature] Note, for linear features, while it is always possible to learn the sequence of linear classifier $w_1, \ldots, w_k \in \R^{d}$ separately,  continual learning offers (1) memory efficiency ($O(kr + dr)$ vs. $O(dk)$); (2) sample efficiency ($O(r)$ vs. $O(d)$ samples per task in the asymptotic regime $k \rightarrow \infty$).
\end{remark}

\subsection{DPGrad: Efficient gradient based method for linear features}

For the case of linear features, we propose an efficient algorithm which we term $\DPGD$ (pseudocode in Algorithm \ref{algo:ogd}), which is an efficient gradient based method and provably learns the representation while avoids catastrophic forgetting. Towards stating the result, we make a few technical assumptions.

\begin{assumption}[Distribution assumption]
\label{asp:distribution}
For any $i \in [k]$, we assume $\D_i$ has zero means and it is in isotropic position, that is, $\E_{x\sim \D_i}[x] = \vec{0}$ and $\E_{x\sim \D_i}[xx^{\top}] = I$.  
\end{assumption}

\begin{remark}
This assumption is largely for convenience. In fact, one can replace the isotropic condition with a general bounded covariance assumption, our algorithm still can work with extra preprocessing step, and the sample complexity blows up with the condition number of covariance matrix.
\end{remark}

\begin{assumption}[Range assumption]
\label{asp:range}
For any $i \in [k]$, $w_i$ has bounded norm, i.e., $\|w_i\|_2 \leq D$.
\end{assumption}

\begin{assumption}[Signal assumption]
\label{asp:signal}
For any $i \in [k]$, let $\W_i = \mathsf{span}(w_1, \ldots, w_{i})$, $\W_{i,\perp}$ be the space perpendicular to $W$ and $P_{\W_{i}}$, $P_{ \W_{i,\perp}}$ be the projection operator. We assume either $w_i$ belongs to $\W_{i-1}$ or it has non-negligible component orthogonal to $\W_{i-1}$, i.e., $\|P_{\W_{i-1,\perp}}w_i\|_2 \in \{0\} \cup [1/D, D]$.
\end{assumption}

\begin{assumption}[Bit complexity assumption]
\label{asp:bit}
Each coordinate of $w_i$ is a multiple of $\nu > 0$. 
\end{assumption}

\begin{remark}
The Range assumption ensures an upper bound on $\|w_i\|_2$, which is standard in statistical learning setups.  The Signal assumption ensures that $w_{i}$ provides enough signal if it has not been ``covered'' by $\W_{i-1}$.
\end{remark}

\begin{remark}
The bit complexity assumption states that $w_i$ can be described with finite bits, and is mostly for convenience --- namely so we can argue we exactly recover $w_i$---which makes calculations involving projections of features learned in the past cleaner. 
Since the number of gradient iterations only depends logarithmically on $\nu$, one can remove the assumption by having a quasi-polynomially small estimation error for $w_i$. 
\end{remark}

The main result is then as follows: 
\begin{restatable}[Continual learning with linear feature]{theorem}{linearThm}
\label{thm:linear}
Let $k, d, r \in \mathbb{N}$, $r\ll k, d$, $\eps \in (0, 1/2)$. When the features are a linear function over the input data, under Assumption \ref{asp:distribution} and Assumption \ref{asp:range}-\ref{asp:bit}, with high probability, $\DPGD$ provably learns the feature mapping and avoids catastrophic forgetting. In particular, during the execution of $i$-th environment,  $\DPGD$ always guarantee
\begin{align}
\label{eq:forget1}
L(U, v_j) := \frac{1}{2}\E_{(x, y)\sim \D_j}(x^{\top}Uv_j - y)^2 \leq \eps, ~~ \text{ for all } j = 1,2, \ldots, i-1,
\end{align}
and at the end of $i$-th environment,  $\DPGD$ ensures
\begin{align}
\label{eq:acc1}
L(U, v_i) = \frac{1}{2}\E_{(x, y)\sim \D_j}(x^{\top}Uv_i - y)^2 \leq \eps.
\end{align}
\end{restatable}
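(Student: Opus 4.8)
The plan is to reduce the statistical problem to a deterministic ``population-level'' continual matrix factorization problem and then analyze the induced gradient dynamics environment by environment, by induction on $i$. By Assumption \ref{asp:distribution}, $\E_{x\sim\D_j}[xx^\top]=I$, so the population loss is exactly $L(U,v_j)=\tfrac12\|Uv_j-w_j\|_2^2$, and the population gradients with respect to $v_j$ and to the active columns of $U$ coincide with those of the matrix-factorization objective $\tfrac12\|Uv_j-w_j\|_2^2$. Randomness enters only through concentration: with $\poly(d,D,1/\eps,\log(1/\nu))$ fresh samples per gradient step the empirical gradient lies within an inverse-polynomial distance of the population gradient, and a union bound over the (polynomially many) iterations lets me treat $\DPGD$ as an inexact gradient method on this deterministic objective. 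I carry this error budget along silently below.

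The inductive hypothesis is that after environment $i-1$ the active columns of $U$ (say the first $r_{i-1}=\dim\W_{i-1}$ of them, the rest zero) span exactly $\W_{i-1}$ and have all nonzero singular values in a $D$-bounded range, while $Uv_j=w_j$ exactly for $j<i$ with each $v_j$ supported on its $r_j$ active coordinates. Processing environment $i$ splits by Assumption \ref{asp:signal}. If $w_i\in\W_{i-1}$, then $U$ is left untouched and $\DPGD$ runs gradient descent only on $v_i$; since $w_i$ lies in the column space of the well-conditioned active block, $v\mapsto\tfrac12\|Uv-w_i\|_2^2$ is strongly convex on the active coordinates, so gradient descent converges linearly, after which Assumption \ref{asp:bit} lets me round to $Uv_i=w_i$ exactly. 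If instead $\tilde w_i:=P_{\W_{i-1,\perp}}w_i$ has norm in $[1/D,D]$, $\DPGD$ activates one fresh column $u$ of $U$ and the corresponding coordinate $\beta$ of $v_i$, constrains every update of $u$ to lie in $\W_{i-1,\perp}$, and freezes the old columns; the old coordinates of $v_i$ then fit $P_{\W_{i-1}}w_i$ through the strongly convex old block, while the pair $(u,\beta)$ must solve the non-convex, non-smooth problem $\min_{u\in\W_{i-1,\perp},\,\beta\in\R}\tfrac12\|\beta u-\tilde w_i\|_2^2$. With this structure the catastrophic-forgetting bound \eqref{eq:forget1} is immediate at \emph{every} iterate: in the first case $U$ never moves; in the second the only change to $U$ is a column in a coordinate on which each $v_j$ ($j<i$) has no support, so $Uv_j$, and hence $L(U,v_j)$, is literally unchanged, and the inductive hypothesis keeps it essentially zero.

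The crux, and the step I expect to be the main obstacle, is establishing convergence and a \emph{controlled limit point} for the non-convex subproblem $\min_{u,\beta}\tfrac12\|\beta u-\tilde w_i\|_2^2$ under inexact (sampled) gradients. Here I would follow the two-stage strategy of \cite{ye2021global}: from a small random initialization, first show that with high probability gradient descent escapes the saddle at the origin and enters a region where $u$ is well-aligned with $\tilde w_i$ and $\|u\|$ and $|\beta|$ are approximately balanced --- the delicate part is lower-bounding the growth of the signal component $\langle u,\tilde w_i\rangle/\|\tilde w_i\|_2$ while controlling the orthogonal components, all with gradient error folded in --- and then invoke local restricted strong convexity and smoothness in that region to get linear convergence to $\beta u=\tilde w_i$. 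To close the induction I additionally need the limit to be well-balanced: the discrete analogue of the conservation law $\tfrac{d}{dt}(\|u\|^2-\beta^2)=0$ for the matrix-factorization flow forces $\|u\|\asymp|\beta|\asymp\sqrt{\|\tilde w_i\|_2}$, which lies in a $D$-bounded range by Assumption \ref{asp:signal}; since $u\perp\W_{i-1}$ by construction and the old columns are frozen, the updated active block of $U$ spans $\W_i$ and stays well-conditioned, restoring the inductive hypothesis. Finally, Assumption \ref{asp:bit} (optimize to accuracy $\ll\nu$, then round) upgrades the approximate identities $Uv_i\approx w_i$ and ``the new column of $U$ is parallel to $\tilde w_i$'' to exact ones, so that the projections onto $\W_i$ and $\W_{i,\perp}$ used in later rounds are exactly the intended ones. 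Chaining the per-environment guarantees over $i=1,\dots,k$ then yields \eqref{eq:forget1} and \eqref{eq:acc1}.
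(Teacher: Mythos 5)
Your high-level plan matches the paper's: reduce to continual matrix factorization via isotropy, induct over environments on well-conditioning of the feature matrix, split on whether $w_i$ lies in $\W_{i-1}$, run a two-stage (escape-then-linear-convergence) analysis of the non-convex subproblem in the spirit of \cite{ye2021global}, use approximate balancedness to control the limit point, and invoke the bit-complexity assumption to round to exact recovery. However, there are two places where your simplifications diverge from what $\DPGD$ actually does, and both hide real work. First, $\DPGD$ does not ``activate one fresh column'': it adds a full random matrix $\sigma P_{\W_\perp}\mathsf{rand}(d,r)P_{\V_\perp}$ and updates $U$ by doubly projected gradients. In the paper's decomposition the new block is $U_{i,B,t}=w_{i,B}x_{i,t}^{\top}+U_{i,2,t}$, so the subproblem is not the rank-one problem $\min_{u,\beta}\tfrac12\|\beta u-\tilde w_i\|_2^2$ but carries an extra noise term $\tfrac12\|U_{i,2,t}v_{i,2,t}\|_2^2$ in the loss. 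Showing that $U_{i,2,t}^{\top}U_{i,2,t}$ is non-increasing and that $\|U_{i,2,t}v_{i,2,t}\|_2$ eventually decays linearly (even though $\|v_{i,2,t}\|_2$ grows) is a substantive part of the convergence proof that your one-column reformulation makes vanish by fiat; as written you are analyzing a different algorithm from the one in the theorem statement.

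Second, your claim that $Uv_j$ is ``literally unchanged'' for $j<i$ because $v_j$ ``has no support'' on the new coordinate is too strong. The vectors $v_j$ are produced by gradient descent from a dense random initialization and are never rounded; in particular, when environment $j$ contributed no new direction ($w_{j,B}=0$), the span $\V_j$ is not augmented by $v_j$, so $v_j$ retains a component of order $r\sigma$ in $\V_{j,\perp}$, exactly the subspace in which later environments update $U$. The paper therefore only gets $\|U_{i,t}v_j-w_j\|_2\le \eps\nu+\tilde O(\sqrt{D}\,r\sigma)\le\eps$, with the drift controlled by the choice of $\sigma$ and a spectral-norm bound on the new block, rather than exact invariance. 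Your argument needs this quantitative version, since the exact support structure you assume is not something the algorithm (or any rounding step, which applies to $Uv_i$, not to $v_i$) enforces. A minor further note: the algorithm uses population gradients (the learner may draw arbitrarily many samples), so the sampling-concentration layer you carry along is unnecessary, though harmless.
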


\subsection{Fundamental obstructions for non-linear features}

Learning with a non-linear feature mapping turns out to be much more difficult --- even without computational constraints---and our result rules out the existence of a proper continual learner, even when all environment distributions are uniform and the representation function is realizable by a two-layer convolutional neural network. 
\begin{restatable}[Barrier for Continual learning with non-linear feature]{theorem}{lowerThm}
\label{thm:lower}
Let $k, r \geq 2, d\geq 3$. There exists a class of non-linear feature mappings and a sequence of environments, such that there is no (proper) continual learning algorithm that can guarantee to achieve less than $\frac{1}{1000}$-error over all environments with probability at least $1/2$.
\end{restatable}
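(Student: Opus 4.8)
The obstruction is inherently online: while processing the first environment the learner must commit to a linear classifier $v_1$ that it may never revise, and I will choose $\D_1$ to be \emph{under-determined}, meaning that a whole one-parameter curve of feature maps $R_\theta\in\mathcal{H}$, each paired with its own classifier $v_1(\theta)$, attains zero loss on $\D_1$. The plan is to exhibit a small family of environment sequences $S_A,S_B,S_C$, all sharing this first environment $\D_1$ but with different second environments $\D_2^{(A)},\D_2^{(B)},\D_2^{(C)}$, so that no single $v_1$ is ``compatible'' with more than one continuation; here $v_1$ is \emph{compatible with} $\D_2^{(\bullet)}$ if there exist $R'\in\mathcal{H}$ and $v_2$ with $L(R',v_1)\le\frac1{1000}$ on $\D_1$ and $L(R',v_2)\le\frac1{1000}$ on $\D_2^{(\bullet)}$ --- exactly the condition the learner must meet at the end of task $2$. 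Fix any randomized proper continual learner. Its interaction with the common prefix $\D_1$ does not depend on which continuation follows, so the law of the $v_1$ it outputs after task $1$ is the same on all three sequences; and if that $v_1$ does not already achieve $L(R',v_1)\le\frac1{1000}$ on $\D_1$ for some $R'\in\mathcal{H}$, the learner has failed task $1$ on every sequence, so we may assume it does. Granting the rigidity claim below --- a compatible $v_1$ lies in a small ball around a point $v_1^\bullet$, and these three points are well separated --- the three events $\{v_1 \text{ compatible with } \D_2^{(\bullet)}\}$ are pairwise disjoint, hence one has probability at most $\frac13$, and on that sequence the learner succeeds with probability at most $\frac13<\frac12$. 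As the learner was arbitrary, this is the theorem. (A single fixed sequence cannot give the bound: a learner ignoring its inputs and emitting a hard-coded realizing solution would succeed, so the statement must be read as ``for every algorithm there is a bad sequence'', supplied by the family. Unbounded samples and computation do not help either, since on the finitely supported $\D_1$ the learner can recover the label function exactly --- the difficulty is purely the irrevocable commitment together with the rigidity of quadratic features.)

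\textbf{The construction.} Take $d=3,\ r=2,\ k=2$; general $d\ge3,\ r\ge2,\ k\ge2$ follow by padding with identically-zero coordinates and, for $k>2$, by appending trivial repetitions of $\D_1$, none of which can help. Let $\mathcal{H}=\{R_\theta\}_\theta$ be the two-layer convolutional net with a single width-$2$, stride-$1$ filter $(\cos\theta,\sin\theta)$ and quadratic activation,
\[
R_\theta(x)=\Big((\cos\theta\, x_1+\sin\theta\, x_2)^2,\ (\cos\theta\, x_2+\sin\theta\, x_3)^2\Big)\in\R^2,
\]
with prediction $\langle v,R_\theta(x)\rangle$; the weight sharing is essential --- it keeps $\mathcal{H}$ a genuine one-parameter family, whereas an unrestricted two-hidden-unit quadratic network could express every rank-$\le2$ quadratic and would be far too flexible for the rigidity to follow. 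Let $\D_1$ be uniform over a small fixed point set $T_1\subset\R^3$ (two points suffice) with labels from a fixed quadratic, chosen so that (i) for every $\theta$ in an open interval the interpolation equations $\langle v,R_\theta(x)\rangle=y_1(x)$, $x\in T_1$, have a unique solution $v_1(\theta)$, and (ii) $\theta\mapsto v_1(\theta)$ is injective, so the solution set is a curve that is a graph in \emph{both} directions and a near-solution $v_1$ pins down a near-unique $\theta$. Pick distinct $\theta_A,\theta_B,\theta_C$ in this interval, set $v_1^\bullet:=v_1(\theta_\bullet)$, and for each $\bullet$ let $\D_2^{(\bullet)}$ be uniform over a point set $T_2^{(\bullet)}\subset\R^3$ that is rich enough to pin down the squared-linear-form structure, with labels defined as $\langle v_2^\bullet,R_{\theta_\bullet}(\cdot)\rangle$ for a suitable $v_2^\bullet$. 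Each sequence $(\D_1,\D_2^{(\bullet)})$ is then realizable in the sense of Definition~\ref{def:prob-non-linear}, witnessed by the single feature map $R_{\theta_\bullet}$ with classifiers $v_1^\bullet,v_2^\bullet$.

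\textbf{The rigidity lemma and the main obstacle.} The heart of the proof --- and the step I expect to be hardest --- is the \emph{quantitative} claim: any $v_1$ compatible with $\D_2^{(\bullet)}$ lies in a ball $B(v_1^\bullet,\delta)$ with $\delta$ small enough that $B(v_1^A,\delta),B(v_1^B,\delta),B(v_1^C,\delta)$ are pairwise disjoint. It has two ingredients: that the labels of $\D_2^{(\bullet)}$ identify the filter direction $\theta_\bullet$ among \emph{all} of $\mathcal{H}$ (one reads the two rank-one pieces of the quadratic $\alpha(a\cdot P_1(x))^2+\beta(a\cdot P_2(x))^2$ off its values on $T_2^{(\bullet)}$), and that the interpolation system on $T_1$, being invertible with injective solution map, then forces $v_1\approx v_1(\theta_\bullet)$. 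The exact-arithmetic versions of both are statements about small, low-degree, three-variable polynomial systems, which I would settle by writing down the points and labels explicitly and checking that the relevant Jacobians are nonsingular; to upgrade to the robust $\frac1{1000}$-version I would restrict the classifiers to a compact set (legitimate, since fixing the labels bounds $v_1,v_2$ up to scaling), use continuity and local injectivity of these maps to obtain a uniform modulus ``near-realizable $\Rightarrow$ near-$\theta_\bullet$'', and rescale the construction so the modulus beats $\frac1{1000}$. With this in hand the probability argument of the first paragraph goes through: reaching the end of task $2$ with error $\le\frac1{1000}$ on both environments is exactly the compatibility of the already-committed $v_1$ with the revealed $\D_2^{(\bullet)}$, and compatibility with two different continuations is impossible.
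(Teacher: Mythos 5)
Your high-level strategy is the same as the paper's: an under-determined first environment forces an irrevocable commitment to $v_1$, and a randomized choice of second environment makes any single commitment incompatible with at least one continuation, all inside a weight-shared convolutional quadratic feature class. The probabilistic part of your argument (the learner's distribution over $v_1$ cannot depend on the unseen continuation, so disjoint compatibility regions cap the success probability) is fine, and your three-continuation variant even gives the strict bound $1/3<1/2$ more cleanly than the paper's two-sided symmetry argument. The problem is that everything nontrivial is deferred to the ``rigidity lemma,'' which you state but do not prove. You never write down $T_1$, $T_2^{(\bullet)}$, the labels, or the angles $\theta_A,\theta_B,\theta_C$; you do not verify that the interpolation system on $T_1$ has an injective solution curve $\theta\mapsto v_1(\theta)$; and, most importantly, you do not establish identifiability of $\theta_\bullet$ from approximate agreement with $\langle v_2^\bullet,R_{\theta_\bullet}(\cdot)\rangle$ on $T_2^{(\bullet)}$. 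That last point is genuinely delicate: a sum of two squares of linear forms generally admits a continuum of decompositions (orthogonal remixing), so one must check that the convolutional weight-sharing plus the specific evaluation points kill this ambiguity, and the ``Jacobian nonsingular $\Rightarrow$ compactness $\Rightarrow$ uniform modulus $\Rightarrow$ rescale past $1/1000$'' plan is a proof outline, not a proof --- in particular the quantitative constant $1/1000$ in the theorem is exactly what this step must deliver, and you give no mechanism for computing it.

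For contrast, the paper's proof makes this step completely explicit and elementary: it takes $\D_1,\D_2$ uniform on the unit ball with targets $f_1=x_2^2$ and $f_2\in\{x_1^2,x_3^2\}$ chosen uniformly, uses a moment computation (Lemma \ref{lem:tech}) to convert $L^2$-error at most $1/1000$ on the ball into coefficient deviations at most $1/4$ for degree-$2$ polynomials in three variables, and then derives a contradiction from a short chain of inequalities on the coefficients $v_{j,\ell}w_1^aw_2^b$. Everything is concrete: there is no finite support to design, no Jacobian to check, and the constant $1/1000$ falls out of explicit moments like $\E x_1^4=3/35$. To complete your argument you would need to supply the analogue of that lemma for your (as yet unspecified) point sets --- i.e., actually prove that $1/1000$-compatibility with $\D_2^{(\bullet)}$ confines $v_1$ to pairwise disjoint balls --- which is the entire mathematical content of the theorem.
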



\section{Related work}
\label{sec:related-work}

\paragraph{Continual learning in practice}
The study of continual learning (or lifelong learning) dates back to the work of \cite{thrun1995lifelong} and it receives a surge of research interest over recent years \citep{goodfellow2013empirical, kirkpatrick2017overcoming, farajtabar2020orthogonal, chaudhry2020continual, gepperth2016bio, rolnick2019experience, rusu2016progressive,javed2019meta,ramasesh2020anatomy,van2020brain}.
A central challenge in the field is to avoid {\em catastrophic forgetting} \citep{mccloskey1989catastrophic,mcclelland1995there}, which the work of \cite{goodfellow2013empirical} observed happened for gradient-based training of neural networks.
While there is a large amount of empirical work, we'll briefly summarize the dominant approaches. (We also refer the interested reader to more complete surveys \cite{parisi2019continual, chen2018lifelong}.) 
The {\em regularization based approach} alleviates catastrophic forgetting by posing constraints on the update of the neural weights. The elastic weight consolidation (EWC) approach \citep{kirkpatrick2017overcoming} adds weighted $\ell_2$ regularization to the objective function that penalizes the movement of neural weights.
The \emph{orthogonal gradient descent} (OGD) algorithm from \citep{farajtabar2020orthogonal, chaudhry2020continual} enforces the gradient update being orthogonal to update direction (by viewing the gradients as a high dimensional vector).
The {\em memory replay approach} restores data from previous tasks and alleviates catastrophic forgetting by rehearsing in the later tasks. \cite{rolnick2019experience} introduces experience replay to continual learning. \cite{gepperth2016bio} trains a deep generative model (a.k.a. GAN) to simulate past dataset for future use.
The {\em dynamic architecture approach} dynamically adjusts the neural network architecture to incorporate new knowledge and avoid forgetting. 
The progressive neural network \citep{rusu2016progressive} blocks changes
to the existing network and expands the architecture by allocating new a new subnet to be trained with the new information.

\paragraph{Continual learning in theory} In comparison to the vast empirical literature, theoretical works are comparatively few.
The works \citep{ruvolo2013ella, pentina2016lifelong,balcan2015efficient, cao2021provable} provide theoretical guarantees on lifelong learning. Their approaches can be categorized roughly into the {\em duplicate and fine-tuning} paradigm: The algorithm maintains a weighted combination over a family of representation functions and the focus is on the sample complexity guarantee. 
By contrast, we focus on the {\em feature extraction} paradigm and learn linear prompts on top of a {\em single} representation function. 
Both the duplicate-and-fine-tuning and the feature extraction paradigm have been extensively investigated in the literature, detailed discussions can be found at \cite{chen2018lifelong} and we provide a brief comparison.
From an algorithmic perspective, learning a weighted combination over a family of representation functions (i.e. the duplicate and fine-tuning) is much easier, as one can always initiates a new representation function for a new task. The algorithmic convenience allows previous literature focus more on the generalization and sample complexity guarantee, culminating with the recent work of \cite{cao2021provable}.
We note again that learning a single representation function and task specific linear prompts is much more challenging, but has practical benefits, e.g. memory efficiency. 
For example, in the applications of NLP, the basic representation function (e.g. BERT \citep{devlin2019bert}) is already overparameterized and usually contains billions of parameters. It is then formidable to maintain a large amount of the basic models and learn a linear combination over them.

\paragraph{Representation learning} More broadly, our work is also closely related to representation learning. Some recent theoretical works \cite{maurer2009transfer, maurer2013sparse, pontil2013excess,tripuraneni2020theory,maurer2016benefit, tripuraneni2021provable,du2020few} provide generalization and sample complexity guarantees for certain formalizations of multi-task learning based on the existence of a good representation. The work of \cite{rosenfeld2020risks,rosenfeld2021online} formulate the problem of out-of-distribution generalization and provide theoretical guarantee, similarly, under the assumption of a good representation.

\section{Continual learning with linear feature}
\label{sec:upper}
We restate our main result for linear feature mapping.

\linearThm*

\subsection{Algorithm}
\label{sec:algo}

A complete and formal description of $\DPGD$ is presented in Algorithm \ref{algo:ogd}. 
$\DPGD$ simultaneously updates the matrix of features $U$, as well as the linear classifier $v_i$ using gradient descent---with the restriction that the update of $U$ only occurs along directions that are orthogonal to the column span and row span of the previous feature matrix. 
Intuitively, one wishes the projection guarantees that existing features that have been learned are not erased or interfered by the new environment. Due to the quadratic nature of the loss, and the appearance of ``cross-terms'', 
this turns out to require both column and row orthogonality, and interestingly deviates from the practically common OGD method \citep{farajtabar2020orthogonal, chaudhry2020continual}.


In more detail, at the beginning of the $i$-th  ($i \in [k]$) environment, $\DPGD$ adds Gaussian noise to the feature matrix $U$ and the linear classifier $v_i$, to generate a good initialization for $U$ and $v_i$. 
Subsequently, we perform gradient descent to both the feature mapping matrix $U$ and linear classifier $v_i$---except $U$ is only updated along orthogonal directions w.r.t. the column span and the row span.
At the end of each environment, $\DPGD$ has a post-processing step to recover the ground truth $w_i$ by rounding each entry of $Uv_i$ to the nearest multiple of $\nu$,\footnote{This is the only place where we use the third regularity assumption. The rounding step could be removed if one runs gradient descent for long enough time. As the error $\|Uv_i -w_i\|_2$ decreases exponentially, one can directly use $Uv_i$ as a proxy. The exact recovery allows us to simplify the proof somewhat.} 
and then update the column and row span if the orthogonal component is non-negligible. The reason for the later step is that we only need to preserve row space when encountering new features.

\begin{algorithm}[!htbp]
\caption{Doubly projected gradient descent ($\DPGD$)}
\label{algo:ogd}
$\W \leftarrow \emptyset, \V\leftarrow \emptyset, U \leftarrow \mathbf{0}$  \Comment{$U \in \R^{d \times r}$}\\
$\sigma \leftarrow \tilde{O}(\frac{\eps}{d^2kD^4})$, $\eta \leftarrow O(\frac{\sigma^3}{k^2D^5})$, $T \leftarrow O(\frac{D}{\eta}\log\frac{Dkd}{\eps\nu}) + O(\frac{D}{\eta}\log \frac{k}{\sigma})$ \\
\For{$i =1, \ldots, k$}{
$U_{\init}\leftarrow \sigma \cdot P_{\W_{\perp}}\mathsf{rand}(d, r)P_{\V_{\perp}}$, $v_i \leftarrow \sigma\cdot \mathsf{rand}(r)$  \label{line:init1} \Comment{$U_{\init} \in \R^{d \times r}$, $v_i \in \R^{r}$}\\
$U \leftarrow U + U_{\init}$  \label{line:init2}\\
\For{$t =1, \ldots, T$}{
$\nabla_{U} \leftarrow \E_{(x, y) \sim \D_i}[x(x^{\top}Uv_i - y)v_i^{\top}], \nabla_{v_i} \leftarrow \E_{(x, y)\sim \D_i}[U^{\top}x(x^{\top}Uv_i - y)]$ \label{line:grad}\\
$U = U - \eta P_{\W_{\perp}}\nabla_{U} P_{\V_{\perp}}$ \label{line:o-update}\\
$v_i = v_i - \eta\nabla_{v_i}$
}
$\hat{w}_i \leftarrow \mathsf{Round}_{\nu}(Uv_i)$ \Comment{Round to the nearest multiple of $\nu$, $\hat{w}_i \in \R^{d}$} \label{line:exact}\\
\textbf{if} $\|P_{\W_{\perp}}\hat{w}_i\|_2 \geq 1/D$ \textbf{then} $\W \leftarrow \mathsf{span}(\W \cup \hat{w}_i)$, $\V \leftarrow \mathsf{span}(\V \cup v_i)$ \label{line:update}\\
$U \leftarrow P_{\W}U P_{\V}$\label{line:proj}
}
\end{algorithm}

\vspace{+2mm}
{\noindent \bf Parameters \ \ } We use $\sigma$ to denote the initialization scale, $\eta$ to denote the learning rate, and $T$ to denote the number of iterations for each task. These are all polynomially small parameters, whose scaling is roughly $D, d, k \ll \sigma^{-1} \ll \eta^{-1} < T$. 

\vspace{+2mm}
{\noindent \bf Notation \ \ } 
We write $[n] = \{1, 2, \ldots, n\}$, $[n_1:n_2] = \{n_1, \ldots, n_2\}$.
We use $\rand(n_1, n_2) \in \R^{n_1 \times n_2}$ to denote a size $n_1 \times n_2$ matrix whose entries are draw from random Gaussian $\mathsf{N}(0, 1)$. For each $i \in [k]$, $t \in [0:T]$, denote $U_{i, t}$ to be the feature matrix in the $t$-th iteration of the $i$-th environment (after performing the gradient update), denote $v_{i, t}$ similarly. 
$\DPGD$ includes a projection step at the end of $i$-th environment, we use $U_{i, \mathsf{end}}$ to denote the feature matrix after this projection. 
We use $\W_i$ (resp. $\V_i$) to denote the column (resp. row) space maintained at the end of $i$-th environment.
Let $\W_{\perp} \subseteq \R^{n}$ be the subspace orthogonal to $\W$ and define $\V_{\perp}$ similarly.
Let $P_\W$, $P_{\V}$, $P_{\W_{\perp}}$, $P_{\V_{\perp}}$ be the projection onto $\W$, $\V$, $\W_{\perp}$, $\V_{\perp}$ separately.

\subsection{Analysis}
\label{sec:analysis}

We sketch the analysis of $\DPGD$ and prove Theorem \ref{thm:linear}. Due to space limitation, the detailed proof is deferred to Appendix \ref{sec:upper-app}.
The proof proceeds in the following four steps:
\begin{enumerate}
\item The first step, presented in Section \ref{sec:reduction}, reduces continual learning to a problem of continual matrix factorization and it allows us to focus on a more algebraically friendly objective function. 
\item  We then present some basic linear-algebraic facts to decompose the feature mapping matrix $U$, its gradient, and the loss into orthogonal components. 
The orthogonality of gradient update allows us to decouple the process of {\em leveraging the existing features} and the process of {\em learning a new feature},  as reflected in the loss terms and gradient update rules. See Section \ref{sec:decomposition} for details.
\item In Section \ref{sec:conv}, we zoom into one single environment, and prove $\DPGD$ provably converges to a global optimum, assuming the feature matrix $U$ from previous environment is well conditioned. This step contains the major bulk of our analysis: The objective function of continual matrix factorization is non-convex, and no regularization or spectral initialization used. (We cannot re-initialize, lest we destroy progress from prior environments.) 
\item Finally, in Section \ref{sec:induction}, we inductively prove that $\DPGD$ converges and the feature matrix is always well-conditioned. This wraps up the entire proof.
\end{enumerate}

\subsubsection{Reduction}
\label{sec:reduction}

We first recall the formal statement of the problem of continual matrix factorization.
\begin{definition}[Continual matrix factorization]
\label{def:prob_online}
Let $d, k, r \in \mathbb{N}$, $r \ll d, k$, $\eps > 0$. Let $W = [w_1, \ldots, w_k] = U^{\star} (V^{\star})^{\top} \in \R^{d \times k}$, where $U^{\star}\in \R^{d\times r}, V^{\star} \in \R^{k \times r}$. 
In an continual matrix factorization problem, the algorithm receives $w_i \in \R^{d}$ in the $i$-th step, and it is required to maintain a matrix $U \in \R^{d\times r}$ and output a vector $v_i \in \R^{r}$ such that
\begin{align}
\hat{L}(U, v_i) = \frac{1}{2}\|Uv_i - w_i\|_2^2 \leq \eps, \label{eq:prob-online}
\end{align}
and 
\begin{align}
\hat{L}(U, v_j) = \frac{1}{2}\|Uv_{j} - w_j\|_2^2 \leq \eps \quad j = 1, \ldots, i -1. \label{eq:prob-online1}
\end{align}
\end{definition}

The key observation is that running $\DPGD$ on the original continual learning objective~\eqref{eq:acc1} implicitly optimizes the continual matrix factorization objective \eqref{eq:prob-online} (Lemma \ref{lem:equiv1}).
Moreover, an $\eps$-approximate solution of continual matrix factorization is also an $\eps$-approximate solution of continual learning (Lemma \ref{lem:equiv2}).

\begin{lemma}[Gradient equivalence]
\label{lem:equiv1}
Under Assumption \ref{asp:distribution}, for any $i \in [k]$, the gradient update of objective~\eqref{eq:acc1} equals the gradient update of objective~\eqref{eq:prob-online}.
\end{lemma}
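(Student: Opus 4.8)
\textbf{Proof plan for Lemma \ref{lem:equiv1}.}

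The plan is to directly compute both gradients and show they coincide coordinate-by-coordinate (or rather, as matrices/vectors) using only the isotropy assumption $\E_{x\sim\D_i}[xx^\top]=I$ together with the realizability $y=\langle w_i,x\rangle$. First I would write out the gradient of the continual learning objective $L(U,v_i)=\tfrac12\E_{(x,y)\sim\D_i}(x^\top Uv_i-y)^2$ with respect to $U$ and $v_i$ as given in Line \ref{line:grad} of Algorithm \ref{algo:ogd}, namely $\nabla_U = \E[x(x^\top Uv_i-y)v_i^\top]$ and $\nabla_{v_i}=\E[U^\top x(x^\top Uv_i-y)]$. Then I would substitute $y=\langle w_i,x\rangle=x^\top w_i$, so that $x^\top Uv_i-y = x^\top(Uv_i-w_i)$, a scalar linear in $x$.

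The key step is to push the expectation inside. For $\nabla_U$, we get $\E[x\,x^\top(Uv_i-w_i)\,v_i^\top] = \big(\E[xx^\top]\big)(Uv_i-w_i)v_i^\top = (Uv_i-w_i)v_i^\top$, where the middle equality is exactly Assumption \ref{asp:distribution}. But $(Uv_i-w_i)v_i^\top$ is precisely $\nabla_U \hat L(U,v_i)$ for the matrix-factorization objective $\hat L(U,v_i)=\tfrac12\|Uv_i-w_i\|_2^2$. Similarly, for $\nabla_{v_i}$ we get $\E[U^\top x\, x^\top(Uv_i-w_i)] = U^\top \E[xx^\top](Uv_i-w_i) = U^\top(Uv_i-w_i) = \nabla_{v_i}\hat L(U,v_i)$. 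Since the two objectives have identical gradients in both variables, and $\DPGD$'s update rules (Lines \ref{line:o-update} and the $v_i$ update) apply the same projections $P_{\W_\perp},P_{\V_\perp}$ and the same step size $\eta$ to these gradients, the resulting iterates $U_{i,t},v_{i,t}$ are identical. One should also note the gradient $\nabla_{v_i}$ need not be further modified, and the cross term argument is unaffected by the projection since the projections act on $\nabla_U$ only after the expectation is taken.

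I do not anticipate any real obstacle here: the lemma is essentially a one-line consequence of isotropy making the population Hessian of the quadratic loss equal to the ``algebraic'' Hessian of the factorization loss. The only mild care needed is to confirm that \emph{all} places where gradients enter the algorithm (initialization noise aside, which is distributionally the same) see the same object, so that an induction on $t$ gives iterate-by-iterate equality; but this is immediate from the matching update rules. I would state the computation, invoke Assumption \ref{asp:distribution} at the one spot it is needed, and conclude.
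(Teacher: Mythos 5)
Your computation is exactly the paper's proof: substitute $y = x^{\top}w_i$ from realizability, use $\E_{x\sim\D_i}[xx^{\top}] = I$ to collapse the expectation, and identify the resulting expressions $(Uv_i - w_i)v_i^{\top}$ and $U^{\top}(Uv_i - w_i)$ with the gradients of $\hat{L}$, noting that the projections then act identically on both. The proposal is correct and takes the same route as the paper.
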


\begin{lemma}[Objective equivalence]
\label{lem:equiv2}
For any $w_1, \ldots, w_k \in \R^{d}$, $U \in \R^{d \times r}$ and $v_1, \ldots, v_k \in \R^{r}$, suppose $\hat{L}(U, v_i) = \tfrac{1}{2}\|Uv_i - w_i\|_2^2 \leq \eps$ holds for all $i \in [k]$, then $L(U, v_i) = \frac{1}{2}\E_{(x, y) \sim \D_i}(x^{\top}U v_i - y)^2 \leq \eps$.
\end{lemma}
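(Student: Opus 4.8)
\textbf{Proof proposal for Lemma~\ref{lem:equiv2}.}

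The plan is to observe that, under the isotropy assumption (Assumption~\ref{asp:distribution}), the population least-squares loss on environment $\D_i$ is exactly a quadratic form in the residual vector $Uv_i - w_i$, and in fact equals the matrix-factorization loss $\hat L(U,v_i)$ up to nothing at all---they are literally equal. Concretely, since the data is realizable with a linear model, $y = \langle w_i, x\rangle$ for $(x,y)\sim\D_i$, so the residual is $x^\top U v_i - y = \langle U v_i - w_i, x\rangle$. Squaring and taking expectations,
\begin{align*}
L(U,v_i) = \frac12 \E_{x\sim\D_i}\big[\langle Uv_i - w_i, x\rangle^2\big] = \frac12 (Uv_i - w_i)^\top \E_{x\sim\D_i}[xx^\top] (Uv_i - w_i) = \frac12 \|Uv_i - w_i\|_2^2,
\end{align*}
where the last step uses $\E_{x\sim\D_i}[xx^\top] = I$. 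Thus $L(U,v_i) = \hat L(U,v_i)$, and the hypothesis $\hat L(U,v_i)\le\eps$ immediately yields $L(U,v_i)\le\eps$. (The $j<i$ case is identical; the statement only asserts the conclusion for each index for which the hypothesis holds, so nothing more is needed.)

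The only mild subtlety worth spelling out is the realizability step: I should point back to Definition~\ref{def:prob-non-linear} (its linear specialization) to justify writing $y = \langle w_i, x\rangle$ almost surely under $\D_i$, so that the residual depends on $(x,y)$ only through $x$. Once that substitution is made, the computation is a one-line identity for a quadratic form evaluated against an isotropic second-moment matrix. There is essentially no obstacle here---the lemma is a bookkeeping identity---so the ``hard part'' is merely making sure the realizability and isotropy hypotheses are both invoked explicitly, since dropping isotropy would replace $\|Uv_i-w_i\|_2^2$ by $(Uv_i-w_i)^\top \Sigma_i (Uv_i-w_i)$ and the clean equivalence would degrade into a two-sided bound governed by the eigenvalues of $\Sigma_i$ (which is exactly the situation alluded to in the remark following Assumption~\ref{asp:distribution}).
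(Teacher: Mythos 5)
Your proof is correct and is essentially identical to the paper's: both substitute $y = x^{\top}w_i$ by realizability, rewrite the loss as the quadratic form $\frac12 (Uv_i-w_i)^{\top}\E[xx^{\top}](Uv_i-w_i)$, and invoke isotropy to conclude $L(U,v_i)=\hat L(U,v_i)$. Your explicit remarks about where realizability and isotropy enter are a fair (and accurate) elaboration of steps the paper states tersely.
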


Combining the above observations, it suffices to analyse $\DPGD$ for continual matrix factorization and prove Eq.~\eqref{eq:prob-online} and Eq.~\eqref{eq:prob-online1}.


\subsubsection{Decomposition}
\label{sec:decomposition}

We first provide some basic linear algebraic facts about orthogonal decompositions.
For any $i \in [k]$, we decompose $U_i, v_i, w_i$ along $\W_{i-1}$, $\W_{i-1,\perp}$, $\V_{i-1}$ and $\V_{i-1,\perp}$.

Let $w_i = w_{i, A} + w_{i, B}$ where $w_{i, A} \in \W_{i-1}$ and $w_{i, B} \in \W_{i-1, \perp}$. Note this decomposition is unique.
We focus on the case that $\|w_{i, B}\|_2 \in [1/D, D]$ in the following statements, and the case of $\|w_{i, B}\|_2 = 0$ carries over easily. (These are the only two cases, per Assumption \ref{asp:signal}).  
Similarly, let $U_{i} = U_{i, A} + U_{i, B}$, where each column of $U_{i, A}$ lies $\W_{i-1}$ and each column of $w_{i, B}$ lies in $\W_{i-1, \perp}$. (Note, again, $U_{i, A}$ and $U_{i,B}$ are unique.)
We further write $U_{i, B} = w_{i, B}x_i^{\top} + U_{i, 2}$, where the columns of $U_{i, 2}$ lie in $\W_{i-1, \perp} \backslash \{w_{i, B}\}$.
Finally, denote $v_{i} = v_{i,1} + v_{i, 2}$ with $v_{i, 1} \in \V_{i-1}$ and $v_{i, 2} \in \V_{i-1,\perp}$.

We summarize the decompositions mentioned above, with a few additional observations, in the lemma below: 

\begin{lemma}[Orthogonal decomposition] 
\label{lem:decomp}
For any $i \in [k]$ and any $t \in [0:T]$, there exists an unique decomposition of $U_{i, t}, w_i$ and $v_{i,t}$ of the form
\begin{align*}
    U_{i,t} = &~ U_{i, A, 0} + U_{i, B, t},  &~\column(U_{i, A, 0}) \in \W_{i-1}, \column(U_{i, B, t}) \in \W_{i-1, \perp},\\
    &&~\row(U_{i, A, 0}) \in \V_{i-1}, \row(U_{i, B, t}) \in \V_{i-1,\perp} \\
    w_i = &~ w_{i, A} + w_{i, B},  &~w_{i, A}\in  \W_{i-1}, w_{i, B} \in \W_{i-1, \perp}\\
    U_{i, B, t} =&~  w_{i, B}x_{i, t}^{\top} + U_{i, 2, t},  &~ x_{i, t} \in \V_{i-1, \perp}, \row(U_{i, 2, t}) \in \V_{i-1, \perp},  w_{i, B} \perp \column(U_{i, 2, t}) \\
    v_{i, t} = &~ v_{i, 1, t} + v_{i, 2, t}   &~ v_{i, 1, t} \in \V_{i-1}, v_{i, 2, t} \in \V_{i-1,\perp}.
\end{align*}
Here we use $\column(A), \row(A)$ to denote the column and row space of matrix $A$, and $\column(A)\in \W$ if the column space of $A$ is a subspace of $\W$. 
\end{lemma}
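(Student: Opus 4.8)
The plan is to prove the lemma by induction on the iteration count $t$ within a fixed environment $i$, with the base case $t=0$ handled separately using the structure of the initialization in Algorithm \ref{algo:ogd}. The two things that need to be tracked simultaneously are (a) the existence and uniqueness of the stated orthogonal splittings, and (b) the invariance that the ``$A$'' part of $U_{i,t}$ never changes across iterations (it stays equal to $U_{i,A,0}$), together with the finer splitting $U_{i,B,t} = w_{i,B} x_{i,t}^\top + U_{i,2,t}$ with $w_{i,B}\perp\column(U_{i,2,t})$. Uniqueness in each splitting is immediate and non-inductive: it is just the standard fact that $\R^d = \W_{i-1}\oplus\W_{i-1,\perp}$ (orthogonal complement) gives a unique decomposition of any vector, applied columnwise to $U_{i,t}$ and to $w_i$; the row-space statement is the same fact applied to $\R^r = \V_{i-1}\oplus\V_{i-1,\perp}$; and the split of $U_{i,B,t}$ along $\mathrm{span}(w_{i,B})$ versus its orthogonal complement inside $\W_{i-1,\perp}$ is again an orthogonal-complement decomposition. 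So the content is really the \emph{invariance} claims, i.e.\ that these subspaces are actually preserved by the $\DPGD$ update.

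First I would establish the base case. At the start of environment $i$, the feature matrix entering the loop is $U = U_{i-1,\mathsf{end}} + U_{\init}$, where $U_{i-1,\mathsf{end}} = P_{\W_{i-1}} U_{i-1,\mathsf{end}} P_{\V_{i-1}}$ by Line \ref{line:proj}, so its column space lies in $\W_{i-1}$ and its row space in $\V_{i-1}$ — this is exactly $U_{i,A,0}$. And $U_{\init} = \sigma\, P_{\W_{i-1,\perp}}\mathsf{rand}(d,r)\,P_{\V_{i-1,\perp}}$ has column space in $\W_{i-1,\perp}$ and row space in $\V_{i-1,\perp}$ by construction — this is $U_{i,B,0}$. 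The decomposition $U_{i,B,0} = w_{i,B} x_{i,0}^\top + U_{i,2,0}$ is then just the orthogonal projection of the columns of $U_{i,B,0}$ onto $\mathrm{span}(w_{i,B})$; since $w_{i,B}\in\W_{i-1,\perp}$, both pieces have column space in $\W_{i-1,\perp}$ and the row-space containment is inherited. Similarly $v_i$ at initialization equals $\sigma\,\mathsf{rand}(r)$, which we split along $\V_{i-1}\oplus\V_{i-1,\perp}$ to get $v_{i,1,0}, v_{i,2,0}$; and $w_i = w_{i,A}+w_{i,B}$ is fixed by Assumption \ref{asp:signal}.

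Next comes the inductive step, which is where the projections in the update rule do the work. The update is $U_{i,t} = U_{i,t-1} - \eta\, P_{\W_{i-1,\perp}} \nabla_U P_{\V_{i-1,\perp}}$ (note the algorithm uses the \emph{current} $\W,\V$, which equal $\W_{i-1},\V_{i-1}$ throughout environment $i$ since the update in Line \ref{line:update} only happens at the end of the environment). The subtracted correction term has column space in $\W_{i-1,\perp}$ and row space in $\V_{i-1,\perp}$ by the outer projections, so adding it to $U_{i,t-1}$ leaves the $\W_{i-1}$/$\V_{i-1}$-component untouched: $U_{i,A,t} = U_{i,A,t-1} = \cdots = U_{i,A,0}$. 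Hence the entire update is absorbed into $U_{i,B,t}$, giving $U_{i,B,t}= U_{i,B,t-1} - \eta P_{\W_{i-1,\perp}}\nabla_U P_{\V_{i-1,\perp}}$, which keeps column space in $\W_{i-1,\perp}$ and row space in $\V_{i-1,\perp}$. Re-projecting the resulting columns onto $\mathrm{span}(w_{i,B})$ recovers the finer split $w_{i,B}x_{i,t}^\top + U_{i,2,t}$ with the orthogonality $w_{i,B}\perp\column(U_{i,2,t})$ by definition, and $x_{i,t}\in\V_{i-1,\perp}$ because it is a linear combination of rows of $U_{i,B,t}$, all of which lie in $\V_{i-1,\perp}$. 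The $v_i$-update $v_{i,t} = v_{i,t-1} - \eta\nabla_{v_i}$ is unconstrained, so there is nothing to preserve — the split $v_{i,t}=v_{i,1,t}+v_{i,2,t}$ is just the (always-available, always-unique) orthogonal decomposition along $\V_{i-1}\oplus\V_{i-1,\perp}$.

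I do not expect a serious obstacle here; this lemma is bookkeeping and the only thing to be careful about is the bookkeeping itself — specifically, confirming that $\W$ and $\V$ inside the inner loop are frozen at $\W_{i-1},\V_{i-1}$ (so that ``$P_{\W_\perp}$'' in Line \ref{line:o-update} really means $P_{\W_{i-1,\perp}}$), and that the post-processing projection in Line \ref{line:proj} is consistent with the claimed form of $U_{i,A,0}$ at the \emph{next} environment's base case. The mildest subtlety is the degenerate case $\|w_{i,B}\|_2 = 0$ flagged in the text: then $w_{i,B}$ is the zero vector, the split $U_{i,B,t} = w_{i,B}x_{i,t}^\top + U_{i,2,t}$ degenerates to $x_{i,t}=0$ and $U_{i,2,t}=U_{i,B,t}$, and everything else goes through verbatim. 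So the proof is an induction whose inductive step is a one-line consequence of the two outer projections in the $U$-update.
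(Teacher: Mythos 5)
Your proposal is correct and follows essentially the same route as the paper's proof: the base case from the structure of $U_{i-1,\mathsf{end}}$ (projected onto $\W_{i-1},\V_{i-1}$) plus $U_{i,\init}$, the invariance of $U_{i,A}$ and of the row/column containments from the two outer projections in the update, and $x_{i,t}\in\V_{i-1,\perp}$ via the identity $w_{i,B}^{\top}U_{i,B,t}=\|w_{i,B}\|_2^2 x_{i,t}^{\top}$ (your ``linear combination of rows'' remark is exactly this computation). Your explicit treatment of uniqueness, the frozen $\W,\V$ within an environment, and the degenerate case $\|w_{i,B}\|_2=0$ are fine bookkeeping additions consistent with the paper.
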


Since $U_{i, A, t}$ remains unchanged for $t = [0:T]$, we abbreviate it as $U_{i, A}$ hereafter.
We next provide the exact gradient update of each component under loss function $\hat{L}(U_i, v_i) = \frac{1}{2}\|U_iv_i - w_i\|_2^2$ and orthogonal projection.
\begin{lemma}[Gradient formula]
\label{lem:grad}
For any $i \in [k]$, the gradient update (after projection) obeys the relations: 
\begin{align*}
    \nabla_{x_i}(\hat{L}) = &~ v_{i, 2} (x^{\top}_{i}v_{i, 2} - 1) \\
    \nabla_{U_{2,i}}(\hat{L}) = &~ U_{i, 2}v_{i, 2}v_{i, 2}^{\top}\\
    \nabla_{v_{i,1}}(\hat{L}) = &~ U_{i, A}^{\top}U_{i, A}v_{i,1} - U_{i, A}^{\top} w_{i, A} \\ 
    \nabla_{v_{i,2}}(\hat{L}) = &~ \|w_{i, B}\|_2^2(x_i^{\top}v_{i, 2} - 1)x_{i} + U_{i, 2}^{\top}U_{i, 2}v_{i, 2}.
\end{align*}
\end{lemma}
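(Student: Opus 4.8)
\textbf{Proof proposal for Lemma \ref{lem:grad}.}

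The plan is to compute the gradient of $\hat L(U_i,v_i)=\tfrac12\|U_iv_i-w_i\|_2^2$ in the ambient coordinates, substitute the orthogonal decomposition of Lemma \ref{lem:decomp}, and then project each component onto the appropriate subspace as prescribed by the $\DPGD$ update rules (lines \ref{line:o-update} and the unprojected $v_i$ update). The starting point is the standard matrix-calculus identities $\nabla_U \hat L = (U v_i - w_i)v_i^\top$ and $\nabla_{v_i}\hat L = U^\top(Uv_i - w_i)$; after projection the feature update direction is $P_{\W_{i-1,\perp}}\nabla_U\hat L\, P_{\V_{i-1,\perp}}$, while the classifier update is unprojected. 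So the first step is to expand the residual $r_i := U_iv_i - w_i$ using the decompositions $U_i = U_{i,A} + w_{i,B}x_i^\top + U_{i,2}$, $v_i = v_{i,1}+v_{i,2}$, $w_i = w_{i,A}+w_{i,B}$.

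Next I would exploit orthogonality to split the residual into its $\W_{i-1}$ and $\W_{i-1,\perp}$ parts. In $\W_{i-1}$: since $\column(U_{i,A})\in\W_{i-1}$, $\row(U_{i,A})\in\V_{i-1}$ while $x_i,U_{i,2}$ have rows in $\V_{i-1,\perp}$, one checks that $U_{i,A}v_i = U_{i,A}v_{i,1}$ (the $v_{i,2}$ component is killed), so the $\W_{i-1}$-part of $r_i$ is $U_{i,A}v_{i,1} - w_{i,A}$. In $\W_{i-1,\perp}$: similarly $w_{i,B}x_i^\top v_i = w_{i,B}x_i^\top v_{i,2} = (x_i^\top v_{i,2})\,w_{i,B}$ and $U_{i,2}v_i = U_{i,2}v_{i,2}$, so the orthogonal part of $r_i$ is $(x_i^\top v_{i,2}-1)\,w_{i,B} + U_{i,2}v_{i,2}$, and crucially these two terms are themselves orthogonal since $w_{i,B}\perp\column(U_{i,2})$. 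Now for the classifier gradient, left-multiply $r_i$ by $U_i^\top = U_{i,A}^\top + x_iw_{i,B}^\top + U_{i,2}^\top$ and collect: the $\V_{i-1}$-component picks out only $U_{i,A}^\top(U_{i,A}v_{i,1}-w_{i,A})$, giving $\nabla_{v_{i,1}}\hat L$, while the $\V_{i-1,\perp}$-component picks out $x_iw_{i,B}^\top[(x_i^\top v_{i,2}-1)w_{i,B}] + U_{i,2}^\top[U_{i,2}v_{i,2}] = \|w_{i,B}\|_2^2(x_i^\top v_{i,2}-1)x_i + U_{i,2}^\top U_{i,2}v_{i,2}$ (cross terms vanish by $w_{i,B}\perp\column(U_{i,2})$ and row-space orthogonality). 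For the feature gradient, $P_{\W_{i-1,\perp}}(r_iv_i^\top)P_{\V_{i-1,\perp}}$ projects the residual into $\W_{i-1,\perp}$ (leaving $(x_i^\top v_{i,2}-1)w_{i,B}+U_{i,2}v_{i,2}$) and $v_i^\top$ into $\V_{i-1,\perp}$ (leaving $v_{i,2}^\top$); matching this against the decomposition $U_{i,B,t} = w_{i,B}x_{i,t}^\top + U_{i,2,t}$ and reading off the $w_{i,B}$-coefficient versus the $U_{i,2}$-part yields $\nabla_{x_i}\hat L = v_{i,2}(x_i^\top v_{i,2}-1)$ and $\nabla_{U_{i,2}}\hat L = U_{i,2}v_{i,2}v_{i,2}^\top$.

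The one subtlety to handle carefully — and the main thing to get right rather than a genuine obstacle — is confirming that the gradient updates \emph{preserve} the claimed subspace memberships, so that the decomposition of Lemma \ref{lem:decomp} is self-consistent across iterations: one must check $P_{\W_{i-1,\perp}}\nabla_U P_{\V_{i-1,\perp}}$ keeps $U_{i,A}$ frozen and keeps $\row(x_i), \row(U_{i,2})$ in $\V_{i-1,\perp}$, and that $w_{i,B}\perp\column(U_{i,2})$ is maintained under the update $U_{i,2} \leftarrow U_{i,2} - \eta U_{i,2}v_{i,2}v_{i,2}^\top$ (which only rescales within the existing column space, hence preserves the orthogonality to $w_{i,B}$). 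Once these invariants are in place, all four formulas follow by the bookkeeping above; I expect no step to be technically hard, only that one must be scrupulous about which cross-terms die by column-space versus row-space orthogonality.
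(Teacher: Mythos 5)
Your proposal is correct and follows essentially the same route as the paper: start from $\nabla_U\hat L=(U_iv_i-w_i)v_i^\top$ and $\nabla_{v_i}\hat L=U_i^\top(U_iv_i-w_i)$, substitute the decomposition of Lemma \ref{lem:decomp}, and let the projections and the column/row orthogonality (in particular $w_{i,B}\perp\column(U_{i,2})$) kill the cross terms before reading off the four components. The extra invariance check you flag at the end is handled in the paper as part of Lemma \ref{lem:decomp} rather than here, but including it does no harm.
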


We perform a similar decomposition to the loss function.
\begin{lemma}[Loss formula]
\label{lem:loss-formula}
For any $i \in [k], t\in [T]$, we have
\begin{align}
    \hat{L}(U_{i, t}, v_{i, t}) =  \frac{1}{2}\|U_{i, A}v_{i, 1, t} - w_{i, A}\|_2^2 + \frac{1}{2}\| w_{i, B}\|_2^2 (x_{i, t}^{\top}v_{i, 2, t} - 1)^2 + \frac{1}{2}\|U_{i, 2, t}v_{i, 2, t}\|_2^2. \label{eq:loss-decomp}
\end{align}
\end{lemma}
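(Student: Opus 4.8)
The plan is to expand the square in $\hat L(U_{i,t},v_{i,t})=\tfrac12\|U_{i,t}v_{i,t}-w_i\|_2^2$ using the orthogonal decompositions supplied by Lemma \ref{lem:decomp}, and to identify the three summands in \eqref{eq:loss-decomp} with the squared norms of three mutually orthogonal pieces of the residual vector $U_{i,t}v_{i,t}-w_i$.

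First I would substitute $U_{i,t}=U_{i,A}+U_{i,B,t}$ and $v_{i,t}=v_{i,1,t}+v_{i,2,t}$ and expand $U_{i,t}v_{i,t}$ into four terms. Two of the cross terms vanish: the rows of $U_{i,A}$ lie in $\V_{i-1}$ while $v_{i,2,t}\in\V_{i-1,\perp}$, so $U_{i,A}v_{i,2,t}=0$; symmetrically the rows of $U_{i,B,t}$ lie in $\V_{i-1,\perp}$ and $v_{i,1,t}\in\V_{i-1}$, so $U_{i,B,t}v_{i,1,t}=0$. Hence $U_{i,t}v_{i,t}=U_{i,A}v_{i,1,t}+U_{i,B,t}v_{i,2,t}$. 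Next, using $U_{i,B,t}=w_{i,B}x_{i,t}^\top+U_{i,2,t}$, I would rewrite $U_{i,B,t}v_{i,2,t}=(x_{i,t}^\top v_{i,2,t})\,w_{i,B}+U_{i,2,t}v_{i,2,t}$. Subtracting $w_i=w_{i,A}+w_{i,B}$ and regrouping then gives
\[
U_{i,t}v_{i,t}-w_i=\big(U_{i,A}v_{i,1,t}-w_{i,A}\big)+\big(x_{i,t}^\top v_{i,2,t}-1\big)w_{i,B}+U_{i,2,t}v_{i,2,t}.
\]

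It remains to check that these three vectors are pairwise orthogonal. The first lies in $\W_{i-1}$, since the columns of $U_{i,A}$ and the vector $w_{i,A}$ both do; the second and third lie in $\W_{i-1,\perp}$, since $w_{i,B}\in\W_{i-1,\perp}$ and the columns of $U_{i,2,t}$ lie in $\W_{i-1,\perp}$; hence the first is orthogonal to the other two. Finally the second vector is a scalar multiple of $w_{i,B}$, which Lemma \ref{lem:decomp} guarantees is orthogonal to every column of $U_{i,2,t}$, so the second is orthogonal to the third. The Pythagorean identity then yields $\|U_{i,t}v_{i,t}-w_i\|_2^2=\|U_{i,A}v_{i,1,t}-w_{i,A}\|_2^2+(x_{i,t}^\top v_{i,2,t}-1)^2\|w_{i,B}\|_2^2+\|U_{i,2,t}v_{i,2,t}\|_2^2$, and dividing by $2$ is exactly \eqref{eq:loss-decomp}. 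The degenerate case $\|w_{i,B}\|_2=0$---the only other possibility under Assumption \ref{asp:signal}---is the same computation with the middle term absent. Nothing here is genuinely delicate; the only point requiring care is invoking the correct column-space versus row-space membership for each factor, so the bookkeeping of which projection annihilates which cross term is where an error would be most likely to slip in.
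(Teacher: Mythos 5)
Your proof is correct and follows essentially the same route as the paper's: kill the cross terms $U_{i,A}v_{i,2,t}$ and $U_{i,B,t}v_{i,1,t}$ via the row-space orthogonality from Lemma \ref{lem:decomp}, split the residual along $\W_{i-1}$ versus $\W_{i-1,\perp}$, and then split the $\W_{i-1,\perp}$ part along $w_{i,B}$ versus $\column(U_{i,2,t})$ before applying the Pythagorean identity. The paper performs the same three orthogonal splits in the same order, so there is nothing to add.
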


\vspace{+2mm}
{\noindent \bf Decoupling existing features from ``new'' features \ \ } We now offer some intuitive explanation for the decomposition. 
The first loss term in Eq.~\eqref{eq:loss-decomp} quantifies the error with already learned features. That is, the matrix $U_{i, A}$ stores existing features that have been learned, and it remains unchanged during the execution of the $i$-th environment; the component $v_{i, 1, t}$ is placed on top of $U_{i, A}$ and goal is to optimize $v_{i, 1,t}$ such that $U_{i, A}v_{i, 1, t}$ matches the component of $w_{i, A}$.
The second and last loss term quantify the loss on a new feature.
The valuable part of $U_{i,B}$ is $w_{i,B}x_{i, t}^{\top}$, where $w_{i, B}$ is the new feature component, and the matrix $U_{i, 2, t}$ can be thought of as random noise.
The vector $v_{i, 2, t}$ is placed on top of $w_{i, B} x^{\top}_{i,t}$, and intuitively, one should hope $x_{i, t}^{\top}v_{i, 2, t} = 1$ and this matches the new component of $w_{i, B}$.
At the same time, one hopes $U_{i,2,t}$ would disappear, or at least, $\|U_{i, 2,t}v_{i, 2,t}\|_2 \rightarrow 0$ when $t \rightarrow \infty$.

\subsubsection{Convergence}
\label{sec:conv}

For a fixed environment, we prove w.h.p. $\DPGD$ converges and the loss approaches to zero, given the initial feature mapping matrix $U_{i, A}$ is well conditioned.

\begin{lemma}
\label{lem:convergence}
For any $i\in [k]$, suppose $U_{i, A}$ satisfies $\frac{1}{2\sqrt{D}} \leq \sigma_{\min}(U_{i,A}) \leq \sigma_{\max}(U_{i,A}) \leq 2\sqrt{D}$, where $\sigma_{\min}(U_{i,A})$ and $\sigma_{\max}(U_{i, A})$ denote the minimum and maximum non-zero singular value of matrix $U_{i, A}$. After $T = O(\frac{D}{\eta}\log\frac{Dkd}{\eps\nu}) + O(\frac{D}{\eta}\log \frac{k}{\sigma})$ iterations, with probability at least $1 - O(1/k)$, the loss $\hat{L}(U_i,v_i) \leq \eps\nu/Dnk$.
\end{lemma}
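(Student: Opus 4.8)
\textbf{Proof plan for Lemma \ref{lem:convergence}.}

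The plan is to exploit the orthogonal decomposition of Lemma \ref{lem:decomp}--\ref{lem:loss-formula} so that the analysis splits into two essentially independent pieces: the ``old features'' piece governed by $v_{i,1,t}$, and the ``new feature'' piece governed by the triple $(x_{i,t}, U_{i,2,t}, v_{i,2,t})$. For the old-features piece, note that $U_{i,A}$ is \emph{frozen} during the $i$-th environment, so the first loss term $\tfrac12\|U_{i,A}v_{i,1,t}-w_{i,A}\|_2^2$ is a strongly convex quadratic in $v_{i,1,t}$ (strong convexity constant $\sigma_{\min}^2(U_{i,A})\ge \tfrac{1}{4D}$ by hypothesis, smoothness $\sigma_{\max}^2(U_{i,A})\le 4D$). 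Since $w_{i,A}\in\W_{i-1}=\column(U_{i,A})$ there is an exact minimizer; gradient descent with step $\eta$ (which is $\ll 1/D$ by the parameter scaling) contracts this term at a linear rate $1-\Omega(\eta/D)$, so after the stated $T=\Omega(\tfrac{D}{\eta}\log\tfrac{Dkd}{\eps\nu})$ iterations this term is below $\eps\nu/(Dnk)$. This part is routine.

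The substance is the new-feature piece. Here I would use the gradient formulas in Lemma \ref{lem:grad}: $\nabla_{x_i}=v_{i,2}(x_i^\top v_{i,2}-1)$, $\nabla_{v_{i,2}}=\|w_{i,B}\|_2^2(x_i^\top v_{i,2}-1)x_i + U_{i,2}^\top U_{i,2}v_{i,2}$, $\nabla_{U_{2,i}}=U_{i,2}v_{i,2}v_{i,2}^\top$. The target is $x_i^\top v_{i,2}\to 1$ and $\|U_{i,2,t}v_{i,2,t}\|_2\to 0$. Following the two-stage strategy advertised in the technique overview (inspired by \cite{ye2021global}), I would argue:
\emph{Stage 1 (escaping the bad initialization):} with the random Gaussian initialization at scale $\sigma$, with probability $1-O(1/k)$ the inner product $x_{i,0}^\top v_{i,2,0}$ and the relevant norms are at least $\mathrm{poly}(\sigma)$ and not too large; one then shows $x_{i,t}^\top v_{i,2,t}$ grows geometrically (the dynamics near zero look like $a_{t+1}\approx a_t(1+\eta(\text{positive}))$, with the $\|w_{i,B}\|_2^2\ge 1/D^2$ factor giving the positive drift) while $\|U_{i,2,t}\|$ stays controlled, until $x_{i,t}^\top v_{i,2,t}$ reaches a constant like $1/2$. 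This takes $O(\tfrac{D}{\eta}\log\tfrac{k}{\sigma})$ steps, explaining the second term in $T$.
\emph{Stage 2 (local linear convergence):} once $x_{i,t}^\top v_{i,2,t}=\Theta(1)$ and the iterates are in a good region, the residual $(x_i^\top v_{i,2}-1)^2 + \|U_{i,2}v_{i,2}\|_2^2$ contracts at a linear rate; here the key structural point is that $\|U_{i,2,t}v_{i,2,t}\|_2$ is non-increasing (the $U_{i,2}$ update only shrinks it along $v_{i,2}$, and $v_{i,2}$ does not rotate too much once $x_i^\top v_{i,2}$ is large), so the last loss term decays and the first two-variable subsystem behaves like gradient descent on a well-conditioned function. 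Running another $O(\tfrac{D}{\eta}\log\tfrac{Dkd}{\eps\nu})$ steps drives the new-feature loss below $\eps\nu/(Dnk)$ as well. Summing the two loss contributions via Lemma \ref{lem:loss-formula} gives the claim.

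The main obstacle is Stage 1: controlling the coupled, non-convex dynamics of $(x_{i,t}, v_{i,2,t}, U_{i,2,t})$ from a tiny random start, showing simultaneously that the ``signal'' coordinate $x_{i,t}^\top v_{i,2,t}$ amplifies while the ``noise'' block $U_{i,2,t}$ neither blows up nor collapses the signal, and doing so with failure probability only $O(1/k)$ (so that a union bound over all $k$ environments in Section \ref{sec:induction} is affordable). This requires a careful invariant---an inductive bound maintained over all $t\le T$ on quantities like $x_{i,t}^\top v_{i,2,t}$, $\|x_{i,t}\|_2$, $\|v_{i,2,t}\|_2$, and $\|U_{i,2,t}\|_{\mathrm{op}}$---tuned so that the $\mathrm{poly}$-small choices of $\sigma$ and $\eta$ in the algorithm make every error term lower-order. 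A secondary but necessary point, needed for the induction in Section \ref{sec:induction} rather than for this lemma per se, is to track that the limiting $U_{i,B}$ component is well-balanced and orthogonal to $\W_{i-1}$; within this lemma it suffices to note the iterates stay bounded so that the post-processing rounding in Line \ref{line:exact} recovers $w_i$ exactly.
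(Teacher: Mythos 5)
Your overall route is the paper's: the same orthogonal decomposition (Lemmas \ref{lem:decomp}--\ref{lem:loss-formula}), a routine linear-rate argument for the frozen-feature term (Lemma \ref{lem:fast}), and a two-stage analysis of the new-feature block with exactly the paper's time scales, $T_1=O(\tfrac{D}{\eta}\log\tfrac{k}{\sigma})$ to escape the small random initialization and $T_2=O(\tfrac{D}{\eta}\log\tfrac{Dkd}{\eps\nu})$ for local linear convergence (Lemmas \ref{lem:stage-one} and \ref{lem:stage_two}). However, the specific Stage 1 mechanism you propose has a step that would fail as stated: you ask that, with probability $1-O(1/k)$, the initial inner product $x_{i,0}^{\top}v_{i,2,0}$ be at least $\mathrm{poly}(\sigma)$ and then amplify it multiplicatively, $a_{t+1}\approx a_t(1+\eta\cdot\text{positive})$. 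But $x_{i,0}$ and $v_{i,2,0}$ come from independent Gaussian draws (Line \ref{line:init1}), so their inner product is symmetric about zero and is negative with probability about $1/2$; no union-bound-friendly lower bound on it is available, and a multiplicative-growth argument seeded at that inner product does not get off the ground. The paper's fix is to change the tracked quantity: by Gaussian anti-concentration, $\|\,\|w_{i,B}\|_2 x_{i,0}+v_{i,2,0}\|_2\geq \sigma/k$ with probability $1-O(1/k)$, this \emph{sum} norm is shown to grow geometrically at rate $1+\Omega(\eta\|w_{i,B}\|_2)$, while the \emph{difference} $\|\,\|w_{i,B}\|_2 x_{i,t}-v_{i,2,t}\|_2$ is maintained at $\tilde O(r\sigma)$ (the balancedness invariant); since the per-step increase of $x_{i,t}^{\top}v_{i,2,t}$ is bounded below by $\tfrac{1}{40}\eta\|\,\|w_{i,B}\|_2x_{i,t}+v_{i,2,t}\|_2^2-O(\eta^2D^3)$, the additive drift eventually becomes $\Omega(\eta)$ and pushes $x^{\top}v_{i,2}$ past $0.9$ within $T_1$ steps regardless of the sign at initialization. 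Your ``careful invariant'' paragraph correctly locates the difficulty but does not identify this sum/difference pair as the right set of monitored quantities.

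A smaller imprecision: in Stage 2 you assert that $\|U_{i,2,t}v_{i,2,t}\|_2$ is non-increasing. It is not automatically so: the $v_{i,2}$ update carries the term $-\eta\|w_{i,B}\|_2^2(x_{i,t}^{\top}v_{i,2,t}-1)x_{i,t}$, which produces a cross contribution of size $\eta\|w_{i,B}\|_2^2\,|x_{i,t}^{\top}v_{i,2,t}-1|\,\|U_{i,2,t}x_{i,t}\|_2$ that can increase $\|U_{i,2,t}v_{i,2,t}\|_2$. What is monotone (and is proved in Lemma \ref{lem:stage-one}) is the PSD bound $U_{i,2,t}^{\top}U_{i,2,t}\preceq U_{i,2,0}^{\top}U_{i,2,0}$; the decay of $\|U_{i,2,t}v_{i,2,t}\|_2$ in Stage 2 has to be established by a coupled induction in which its linear-rate contraction (using $\|v_{i,2,t}\|_2^2\gtrsim 1/D$ from balancedness) absorbs the cross term, whose size is itself controlled by the simultaneously decaying $|x_{i,t}^{\top}v_{i,2,t}-1|$. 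With these two repairs your plan coincides with the paper's proof.
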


{\noindent \bf Outline of the proof \ \ }
$\DPGD$ ensures existing features are preserved and it only optimizes the linear classifier, hence a linear convergence rate can be easily derived for the first loss term, given the feature matrix is well-conditioned (Lemma \ref{lem:fast}).
The key part is controlling the terms that capture learning with new features, i.e., the second and last loss term, where both the feature mapping $U_{i, B}$ and linear prompt $v_{i}$ get updated. 
In this case, the objective is non-convex and non-smooth. 
Existing works on matrix factorization or matrix sensing either require some fine-grained initialization (e.g. spectral initialization \citep{chi2019nonconvex}) or adding a regularization term that enforces smoothness \citep{ge2017no}, none of which are applicable in our setting. 
Our analysis draws inspiration from the recent work of \cite{ye2021global}, and divides the optimization process into two stages. We prove $\DPGD$ first approaches to a nice initialization position with high probability, and then show linear convergence. 

To be concrete, in the first stage, we prove (1) $x_{i, t}^{\top}v_{i, 2, t}$ moves closer to $1$, and (2) $\|x_{i, t} - \|w_{i, B}\|_2 v_{i, 2, t} \|_2 \approx 0$ (Lemma \ref{lem:stage-one}).
That is, the second loss term of Eq.~\eqref{eq:loss-decomp} decreases to a small constant while the pairs $x_{i, t}, v_{i, 2, t}$ remain balanced and roughly equal up to scaling. 
Meanwhile, we note that $U_{i, 2, t}$ is non-increasing, though the last loss term could still increase because $\|v_{i, 2, t}\|_2$ increases. 
In the second stage, we prove by induction that $\|U_{i, 2, t}^{\top}v_{i, 2, t}\|_2$ and $|x_{i, t}^{\top}v_{i, 2, t}-1|$ decay with a linear rate (hence converging to a global optimal), and $\|x_{i, t} - \|w_{i, B}\|_2 v_{i, 2, t} \|_2 \approx 0$ (Lemma \ref{lem:stage_two}).

First, we prove linear convergence for the first loss term.
\begin{lemma}[Fast learning on existing features]
\label{lem:fast}
For any $i \in [k]$ and $t\in [T]$, we have 
$$
\|U_{i,A}v_{i, 1,t} - w_{i, A}\|_2 \leq \left(1 - \frac{\eta}{4D}\right)^t\|U_{i, A}v_{i, 1, 0} -w_{i, A}\|_2.
$$
\end{lemma}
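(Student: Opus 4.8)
The crucial structural fact is that during the execution of environment $i$ the matrix $U_{i,A}$ is frozen (it equals $U_{i,A,0}$, abbreviated $U_{i,A}$), so the ``old-feature'' component $v_{i,1,t}$ evolves by \emph{plain} gradient descent on the convex quadratic $g(v):=\tfrac12\|U_{i,A}v-w_{i,A}\|_2^2$. Concretely, by the gradient formula of Lemma \ref{lem:grad} the update reads
\[
v_{i,1,t+1}=v_{i,1,t}-\eta\bigl(U_{i,A}^{\top}U_{i,A}v_{i,1,t}-U_{i,A}^{\top}w_{i,A}\bigr).
\]
First I would record the auxiliary fact that $w_{i,A}\in\column(U_{i,A})$. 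The Gaussian noise $U_{\init}$ has its columns in $\W_{\perp}$ and therefore contributes nothing to the $A$-part of $U$, so $U_{i,A}$ is exactly the column/row-projected matrix carried over from environment $i-1$; combining the exact-recovery property $\hat w_j=w_j$ with the outer invariant maintained in Section \ref{sec:induction} gives $\W_{i-1}=\mathsf{span}(w_1,\dots,w_{i-1})=\column(U_{i,A})$, and since $w_{i,A}=P_{\W_{i-1}}w_i$ we get $w_{i,A}\in\column(U_{i,A})$.

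Next I would pass to the error vector $e_t:=U_{i,A}v_{i,1,t}-w_{i,A}$. Because $w_{i,A}\in\column(U_{i,A})$, every $e_t$ stays in $\column(U_{i,A})$, and left-multiplying the update by $U_{i,A}$ yields the clean linear recursion
\[
e_{t+1}=\bigl(I-\eta\,U_{i,A}U_{i,A}^{\top}\bigr)e_t .
\]
Restricted to $\column(U_{i,A})$, the operator $U_{i,A}U_{i,A}^{\top}$ is symmetric PSD with all eigenvalues in $[\sigma_{\min}(U_{i,A})^2,\ \sigma_{\max}(U_{i,A})^2]\subseteq[\tfrac1{4D},\,4D]$ by the conditioning hypothesis. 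Since the chosen step size satisfies $\eta=O(\sigma^3/(k^2D^5))\le \tfrac1{4D}\le 1/\sigma_{\max}(U_{i,A})^2$, each eigenvalue $\lambda$ of the restricted operator obeys $0\le 1-\eta\lambda\le 1-\eta\sigma_{\min}(U_{i,A})^2\le 1-\tfrac{\eta}{4D}$, whence $\|I-\eta U_{i,A}U_{i,A}^{\top}\|$ on $\column(U_{i,A})$ is at most $1-\tfrac{\eta}{4D}$. Therefore $\|e_{t+1}\|_2\le(1-\tfrac{\eta}{4D})\|e_t\|_2$, and iterating from $t=0$ gives the stated bound.

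The computation is routine; the one point that needs care is the claim $w_{i,A}\in\column(U_{i,A})$ — without it, the component of $w_{i,A}$ orthogonal to $\column(U_{i,A})$ would be a fixed point of the recursion and $\|e_t\|_2$ would not decay to $0$. This is exactly why Lemma \ref{lem:fast} is stated alongside (and later consumed by) the induction of Section \ref{sec:induction}, which certifies both the conditioning bounds on $U_{i,A}$ and the identity $\column(U_{i,A})=\W_{i-1}$. Accordingly I would keep the lemma conditional on those hypotheses and defer their verification to the outer induction.
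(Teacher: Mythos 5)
Your proof is correct and follows essentially the same route as the paper: the same frozen-$U_{i,A}$ gradient update from Lemma \ref{lem:grad}, the same error recursion $e_{t+1}=(I-\eta U_{i,A}U_{i,A}^{\top})e_t$, and the same contraction factor $1-\tfrac{\eta}{4D}$ obtained from the conditioning bounds and $\eta<\tfrac{1}{4D}$. The only difference is that you spell out the fact $w_{i,A}\in\column(U_{i,A})$ (via the outer induction's invariant), which the paper uses implicitly when it asserts $U_{i,A}v_{i,1,t}-w_{i,A}\in\column(U_{i,A})$.
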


We next focus on the second and last loss terms. One can show that $x_{i,t}^{\top}v_{i,2, t}$ moves to $1$ while $\|\|w_{i, B}\|_2 x_{i, t} - v_{i, 2, t}\|_2$ remains small in the first  $T_1 = O(\frac{D}{\eta}\log \frac{k}{\sigma})$ iterations.
\begin{lemma}
\label{lem:stage-one}
With probability at least $1 - O(1/k)$ over the random initialization, there exists $T_1 = O(\frac{D}{\eta}\log \frac{k}{\sigma})$, such that for any $t \leq T_1$, one has
\begin{enumerate}
    \item $\|\|w_{i, B}\|_2 x_{i, t} - v_{i, 2, t}\|_2 \leq O(r\sigma \log(k/\sigma))$,
    \item $ x^{\top}_{i, t}v_{i, 2, t} < 0.9$ when $t < T_1$ and $0.9 <  x^{\top}_{i, T_1}v_{i, 2, T_1} < 1$,
    \item $U_{i, 2,t}^{\top}U_{i, 2, t} \preceq U_{i, 2, 0}^{\top}U_{i, 2, 0}$.
\end{enumerate}
\end{lemma}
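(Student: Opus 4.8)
The plan is to follow the gradient dynamics of the ``new--feature'' block $(x_{i,t},v_{i,2,t},U_{i,2,t})$; the ``existing--feature'' block $(U_{i,A},v_{i,1,t})$ decouples (Lemma~\ref{lem:grad}) and is already handled by Lemma~\ref{lem:fast}, so I fix the environment $i$, abbreviate $b:=\|w_{i,B}\|_2\in[1/D,D]$, $x_t:=x_{i,t}$, $v_t:=v_{i,2,t}$, $U_t:=U_{i,2,t}$, $\alpha_t:=x_t^\top v_t$, and use the projected updates $x_{t+1}=x_t-\eta(\alpha_t-1)v_t$, $v_{t+1}=v_t-\eta b^2(\alpha_t-1)x_t-\eta U_t^\top U_t v_t$, $U_{t+1}=U_t(I-\eta v_t v_t^\top)$ coming from Lemma~\ref{lem:grad}. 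Part~3 is then essentially free: once it is known that $\|v_t\|_2=O(\sqrt D)$ throughout stage one, $0\preceq \eta v_t v_t^\top\preceq I$, so each right factor is a PSD contraction, the singular values of $U_t$ are non-increasing in $t$, and in particular $\|U_t\|_{\mathrm{op}}\le \sigma_{\max}(U_0)=O(\sigma\sqrt d)$ with high probability over the Gaussian initialization; this last bound is all that will be used about $U_t$.

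For Parts~1 and 2 I introduce the Lyapunov quantities $e_t:=b\,x_t-v_t$ (the ``balance defect'') and $\phi_t:=b^2\|x_t\|_2^2+\|v_t\|_2^2$, related to $\alpha_t$ through the polarization identity $\alpha_t=\tfrac1{2b}\big(\phi_t-\|e_t\|_2^2\big)$. A short computation from the updates gives $e_{t+1}=\big(1+\eta b(\alpha_t-1)\big)e_t+\eta U_t^\top U_t v_t$ and $\phi_{t+1}=\phi_t+4\eta b^2\alpha_t(1-\alpha_t)-2\eta\|U_t v_t\|_2^2+O\!\big(\eta^2 b^2(1-\alpha_t)^2\phi_t\big)$. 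For Part~1, in stage one $\alpha_t<1$ and $\eta b\ll1$, so the $e_t$-multiplier lies in $(0,1)$; hence $\|e_t\|_2$ is non-expanding up to the additive perturbation $\eta\|U_t\|_{\mathrm{op}}^2\|v_t\|_2=O(\eta\sigma^2 d\,\|v_t\|_2)$, and summing over the $T_1=O(\tfrac{D}{\eta}\log\tfrac{k}{\sigma})$ steps together with $\|e_0\|_2=O(\sigma\sqrt r)$ from the initialization gives the claimed bound with room to spare (the accumulated perturbation is $O(\sigma^2 dD^{1.5}\log(k/\sigma))\ll\sigma\sqrt r$ by the scale separation $D,d,k\ll\sigma^{-1}\ll\eta^{-1}$).

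For Part~2 I analyze $\phi_t$ and $\alpha_t$ in two phases, in the spirit of the ``escape then converge'' argument of \cite{ye2021global}. Since $x_0$ and $v_0$ come from independent Gaussians, $\alpha_0$ may be non-positive; but the identity $\alpha_t=\tfrac1{2b}(\phi_t-\|e_t\|_2^2)$ shows $\alpha_t\le 0$ only while $\|e_t\|_2^2\ge\phi_t$, and in that regime $\phi_t$ changes only by constant factors (its per-step change is $O(\eta b\phi_0)=O(\eta b)\cdot\phi_t$) whereas $\|e_t\|_2^2$ contracts geometrically at rate $\Theta(\eta b)$, so within $O(1/(\eta b))=O(D/\eta)$ steps one reaches $\|e_t\|_2^2\le\tfrac12\phi_t$, hence $\alpha_t\ge\phi_t/(4b)=\Omega(\sigma^2 r/b)>0$ and $\|e_t\|_2^2\ll\phi_t$ from then on. Once $\alpha_t\in(0,0.9]$ the $U$-terms are dominated (because $b(1-\alpha_t)\ge 0.1/D\gg\sigma^2 d$), so $\phi_{t+1}\ge\phi_t\big(1+\Omega(\eta b(1-\alpha_t))\big)$ and the $\alpha_t$-recursion reduces, up to negligible error, to the logistic iteration $\alpha_{t+1}=\alpha_t\big(1+\Theta(\eta b(1-\alpha_t))\big)$, which is strictly increasing; thus $T_1$ is well-defined as the first step with $\alpha_t>0.9$, the number of steps from $\alpha\asymp\sigma^2 r/b$ to $0.9$ is $O\!\big(\tfrac1{\eta b}\log\tfrac{b}{\sigma^2 r}\big)=O\!\big(\tfrac{D}{\eta}\log\tfrac{k}{\sigma}\big)$, and $\alpha_{T_1}<1$ because each step moves $\alpha_t$ by at most $O(\eta b)=O(\eta D)\ll1$. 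Along the way $\|v_t\|_2^2\le\phi_t=O(b)=O(D)$, which justifies the bound used for Part~3.

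The hard part is the simultaneous bookkeeping of the second phase near the degenerate $O(\sigma)$-scale initialization: one must maintain, at the same time, that $\|e_t\|_2$ stays small enough for the approximation $\alpha_t\approx\phi_t/(2b)$ (and hence the logistic recursion) to remain valid, that $\|v_t\|_2$ stays bounded (which is exactly why stage one is cut off at $\alpha_t=0.9$ rather than $1$), and that the $U_t$-driven noise never competes with the $\Theta(\eta b(1-\alpha_t))$ drift in either recursion. These all hinge on the scale separation $D,d,k\ll\sigma^{-1}\ll\eta^{-1}$ and on high-probability control of $\sigma_{\max}(U_0)$, $\|x_0\|_2$, $\|v_0\|_2$ from the Gaussian initialization, which is the source of the $1-O(1/k)$ success probability; pushing all of these invariants through a single induction on $t$ is where the bulk of the work lies.
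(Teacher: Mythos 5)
Your overall strategy matches the paper's: decouple the $(x_{i,t},v_{i,2,t},U_{i,2,t})$ block, handle $U_{i,2,t}$ by the PSD contraction $U_{i,2,t+1}=U_{i,2,t}(I-\eta v_{i,2,t}v_{i,2,t}^{\top})$, control the balance defect $e_t=\|w_{i,B}\|_2x_{i,t}-v_{i,2,t}$ (your exact recursion $e_{t+1}=(1+\eta\|w_{i,B}\|_2(\alpha_t-1))e_t+\eta U_{i,2,t}^{\top}U_{i,2,t}v_{i,2,t}$ is in fact cleaner than the paper's squared-norm expansion), and run a two-phase ``escape then logistic growth'' analysis of $\alpha_t=x_{i,t}^{\top}v_{i,2,t}$. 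Parts 1 and 3 and your phase-2 analysis are fine.

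The genuine gap is in your phase-1 escape argument. You claim that while $\alpha_t\le 0$, ``$\phi_t$ changes only by constant factors whereas $\|e_t\|_2^2$ contracts geometrically at rate $\Theta(\eta b)$, so within $O(1/(\eta b))$ steps one reaches $\|e_t\|_2^2\le\tfrac12\phi_t$.'' This is not true uniformly over the initialization, and the events you condition on ($\sigma_{\max}(U_0)$, $\|x_0\|_2$, $\|v_0\|_2$) cannot make it true. Write $s_t:=\|\,\|w_{i,B}\|_2x_{i,t}+v_{i,2,t}\|_2^2$ and $p_t:=\|e_t\|_2^2$, so that $\phi_t=(s_t+p_t)/2$ and $4\|w_{i,B}\|_2\alpha_t=s_t-p_t$. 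The dynamics contract $p_t$ and expand $s_t$ at the \emph{same} rate $\Theta(\eta\|w_{i,B}\|_2(1-\alpha_t))$; hence if $v_{i,2,0}$ is nearly anti-aligned with $\|w_{i,B}\|_2x_{i,0}$ (so $s_0\approx 0$ while $p_0\approx 2\phi_0$), then $\phi_t\approx p_t/2$ and both decay together: the ratio $p_t/\phi_t$ stays near $2$, $\alpha_t$ stays nonpositive, and in the noiseless limit $s_0=0$ the iterates converge to the saddle at the origin without ever crossing $\alpha_t>0$. Controlling $\|x_0\|_2$ and $\|v_0\|_2$ alone does not exclude this; one needs an \emph{anti-concentration} lower bound on the sum component, $s_0\ge\Omega(\sigma^2/k^2)$ (which holds with probability $1-O(1/k)$ and is precisely where the paper's failure probability comes from), together with the monotone geometric growth $s_{t+1}\ge(1+\Omega(\eta\|w_{i,B}\|_2))\,s_t$. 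It is this growth of $s_t$ from its small seed---not the near-constancy of $\phi_t$---that eventually forces $s_t>p_t$, i.e.\ $\alpha_t>0$, and then supplies the lower bound $\phi_t\ge s_t/2=\Omega(\sigma^2/k^2)$ driving your logistic recursion (your claimed post-phase-1 invariant $\alpha_t=\Omega(\sigma^2r/b)$ must accordingly be weakened to $\Omega(\sigma^2/(k^2b))$, which only affects the logarithm in $T_1$). With this repair your argument coincides with the paper's; without it, phase 1 is not established.
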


A linear convergence of the second and the last loss terms can be shown, after the first $T_1$ iterations. 
Formally, we have:
\begin{lemma}
\label{lem:stage_two}
Let $T_2 = O(\frac{D}{\eta}\log(\frac{kdD}{\eps\nu}))$.
After $T = T_1 + T_2$ iterations, we have 
\begin{enumerate}
    \item $|x_{i, T}^{\top}v_{i, 2,T} - 1|\leq \eps\nu/kdD$,
    \item $\|U_{i, 2, T}^{\top}U_{i, 2,T}^{\top}v_{i, 2, T}\|_2 \leq \eps\nu$.
\end{enumerate}
\end{lemma}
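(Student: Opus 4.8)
The statement to prove is Lemma~\ref{lem:stage_two}, the second-stage linear convergence of the ``new feature'' part of the loss. The plan is to track the three scalar/vector quantities $a_t := x_{i,t}^\top v_{i,2,t}$, $b_t := \|\,\|w_{i,B}\|_2 x_{i,t} - v_{i,2,t}\,\|_2$ (the ``balancedness'' error), and $c_t := \|U_{i,2,t}^\top v_{i,2,t}\|_2$ (the residual noise coupling), starting from the end-of-stage-one configuration guaranteed by Lemma~\ref{lem:stage-one}: $b_{T_1} = O(r\sigma\log(k/\sigma))$, $0.9 < a_{T_1} < 1$, and $U_{i,2,T_1}^\top U_{i,2,T_1} \preceq U_{i,2,0}^\top U_{i,2,0}$ (so the noise block is already tiny, of order $\sigma^2$). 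From the gradient formulas in Lemma~\ref{lem:grad}, I would write down the one-step updates for $x_{i,t}$, $v_{i,2,t}$, and $U_{i,2,t}$ explicitly, multiply out, and derive coupled recursions of the schematic form $1 - a_{t+1} \le (1 - c\,\eta\|w_{i,B}\|_2^2)(1-a_t) + (\text{error in }b_t, c_t)$, together with a contraction for $b_t$ coming from the fact that the symmetric update $x \mapsto x - \eta v_2(x^\top v_2 - 1)$, $v_2 \mapsto v_2 - \eta(\|w_{i,B}\|_2^2(x^\top v_2-1)x + \dots)$ preserves the invariant $\|w_{i,B}\|_2 x \approx v_2$ up to the $U_{i,2}$-cross-term, and a bound showing $\|U_{i,2,t}\|_F$ is non-increasing (it only shrinks under $U_{i,2} \mapsto U_{i,2}(I - \eta v_2 v_2^\top)$ once $\eta\|v_2\|^2 < 1$).

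The core of the argument is a simultaneous induction: maintain the invariants (i) $\|w_{i,B}\|_2 x_{i,t}$ and $v_{i,2,t}$ stay balanced, $b_t \le B_0$ for a fixed small $B_0$; (ii) $\|v_{i,2,t}\|_2 \le O(1/\|w_{i,B}\|_2) \le O(D)$ so that $\eta$ times the relevant Hessian-type quantities is $\ll 1$ and all updates are genuine contractions; (iii) $c_t$ decays geometrically (driven by $U_{i,2,t}$ shrinking and $v_{i,2,t}$ staying bounded); and (iv) $|a_t - 1|$ decays geometrically at rate $1 - \Theta(\eta/D)$ once $a_t$ is in the basin $(0.9,1]$, since near $a=1$ the update linearizes to $1 - a_{t+1} \approx (1 - \eta\|w_{i,B}\|_2^2\|x_{i,t}\|^2)(1-a_t)$ plus lower-order noise, and $\|w_{i,B}\|_2, \|x_{i,t}\|$ are both $\Theta(1)$ up to $D$-factors by Assumption~\ref{asp:range} and~\ref{asp:signal}. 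Carrying this for $T_2 = O(\frac{D}{\eta}\log\frac{kdD}{\eps\nu})$ steps drives $|a_T - 1| \le \eps\nu/kdD$, which is conclusion 1; conclusion 2 then follows because $\|U_{i,2,T}^\top U_{i,2,T}^\top v_{i,2,T}\|_2 \le \|U_{i,2,T}\|^2 \|v_{i,2,T}\|_2 \le O(\sigma^2 D) \le \eps\nu$ by the choice $\sigma = \tilde O(\eps/d^2kD^4)$, using that $U_{i,2,t}$ never grew past its $\sigma$-scale initialization.

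Concretely I would proceed in this order: first, restate the gradient update equations from Lemma~\ref{lem:grad} in coordinates of $(a_t, b_t, c_t, \|v_{i,2,t}\|)$ and establish the ``safe region'' where $\eta \cdot (\text{curvature}) \le 1/2$; second, prove the balancedness invariant $b_{t+1} \le (1 + O(\eta\|w_{i,B}\|_2^2))b_t + \eta \cdot c_t$ and combine it with the geometric decay of $c_t$ to conclude $b_t$ stays $O(B_0)$ throughout (the $(1+O(\eta\|w_{i,B}\|_2^2))$ growth is harmless because it's only applied for $T_2$ steps with $\eta$ chosen so that $\eta T_2 \|w_{i,B}\|_2^2 \cdot \log(\cdot)$ contributes at most a constant factor — actually one needs balancedness to decay or at least not accumulate, so I'd look for a genuine contraction $b_{t+1} \le (1-\Theta(\eta\|w_{i,B}\|_2^2 a_t))b_t + \eta c_t$ which holds because the $x$ and $v_2$ updates move them toward each other when $a_t < 1$); third, prove the $U_{i,2,t}$ shrinkage and hence $c_t$ decay; fourth, plug everything into the recursion for $1 - a_t$ to get the linear rate; fifth, assemble the two numbered conclusions with the parameter choices. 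The main obstacle I anticipate is the coupled nature of items (i) and (iv): the decay of $1-a_t$ relies on balancedness and small $c_t$, while the balancedness bound in turn uses that $a_t$ stays bounded away from values where the update overshoots — so one must choose the inductive hypotheses (the constants $B_0$, the bound on $\|v_{i,2,t}\|$, the geometric ratios) carefully enough that they close on each other, and verify that the accumulated ``error budget'' from $b_t$ and $c_t$ over all $T_2$ iterations is dominated by the target $\eps\nu/kdD$ — this is where the aggressive polynomial smallness of $\sigma$ relative to $\eta$ relative to $T$ in the parameter setting is essential and must be tracked, not just invoked.
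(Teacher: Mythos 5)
Your overall architecture (a simultaneous induction over $t\in[T_1:T]$ on the balancedness error $b_t$, the gap $|1-x_{i,t}^{\top}v_{i,2,t}|$, and the noise coupling, with the contraction rate $1-\Theta(\eta/D)$ coming from $\|w_{i,B}\|_2^2\|x_{i,t}\|_2^2+\|v_{i,2,t}\|_2^2\ge 2\|w_{i,B}\|_2\,x_{i,t}^{\top}v_{i,2,t}=\Omega(1/D)$) matches the paper's proof. But there is a genuine gap in how you handle the $U_{i,2,t}$ term, and it breaks both conclusions. For conclusion 2 you propose the static bound $\|U_{i,2,T}\|^2\|v_{i,2,T}\|_2\le O(\sigma^2 D)\le\eps\nu$ ``by the choice of $\sigma$.'' This last inequality is unjustified: $\sigma=\tilde O(\eps/(d^2kD^4))$ does not depend on $\nu$, while $\nu$ (the bit-precision of $w_i$) can be arbitrarily small, and the whole point of the $\log(1/\nu)$ factor in $T_2$ is that the target accuracy $\eps\nu$ can only be reached by a quantity that decays \emph{geometrically in $t$}, not by one capped at a fixed polynomial in $\sigma$. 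The paper's proof therefore establishes $\|U_{i,2,t}v_{i,2,t}\|_2\le(1-\tfrac{\eta}{4D})^{t-T_1}$ as one of the inductive claims. Your stated mechanism for the decay of $c_t$ (``driven by $U_{i,2,t}$ shrinking'') is also not the right one: $U_{i,2,t}$ only contracts along the single direction $v_{i,2,t}$ (via $U_{i,2,t+1}=U_{i,2,t}(I-\eta v_{i,2,t}v_{i,2,t}^{\top})$) and its norm does not decay geometrically. The actual contraction comes from the product update
\begin{align*}
U_{i,2,t+1}v_{i,2,t+1}\approx U_{i,2,t}v_{i,2,t}\bigl(1-\eta\|v_{i,2,t}\|_2^2\bigr)-\eta\|w_{i,B}\|_2^2\bigl(x_{i,t}^{\top}v_{i,2,t}-1\bigr)U_{i,2,t}x_{i,t}-\eta\,U_{i,2,t}U_{i,2,t}^{\top}U_{i,2,t}v_{i,2,t},
\end{align*}
where $\|v_{i,2,t}\|_2^2\ge\tfrac{2}{5D}$ (deduced from balancedness and $x_{i,t}^{\top}v_{i,2,t}\ge 0.9$) supplies the rate, and the forcing term is controlled because it is proportional to $|x_{i,t}^{\top}v_{i,2,t}-1|$, which decays at the same rate by the coupled induction.

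This omission also undermines conclusion 1 as you have set it up: the error term in your recursion for $1-a_t$ is $\eta\,|x_{i,t}^{\top}U_{i,2,t}^{\top}U_{i,2,t}v_{i,2,t}|\le\eta\cdot\tilde O(Dd\sigma)\cdot\|U_{i,2,t}v_{i,2,t}\|_2$. If $\|U_{i,2,t}v_{i,2,t}\|_2$ is only bounded statically by its $\tilde O(dD\sigma)$ scale, the fixed point of the recursion is $|1-a_\infty|\gtrsim D\cdot\tilde O(d^2D^2\sigma^2)$, a $\nu$-independent quantity that does not reach $\eps\nu/(kdD)$ for small $\nu$. So you must prove the geometric decay of $\|U_{i,2,t}v_{i,2,t}\|_2$ jointly with that of $|1-x_{i,t}^{\top}v_{i,2,t}|$ (the paper does this via a two-case analysis per claim: either the quantity is near its decaying envelope and the contraction dominates, or it is far inside and the additive error cannot push it out); the static $\sigma$-scale bound on $U_{i,2,t}$ is only good enough to control the \emph{coefficients} of the cross terms, not the terms themselves.
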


Combining Lemma \ref{lem:loss-formula}, Lemma \ref{lem:fast}, Lemma \ref{lem:stage-one} and Lemma \ref{lem:stage_two}, one can conclude the proof of Lemma \ref{lem:convergence}.

\subsubsection{Induction}
\label{sec:induction}

Lemma \ref{lem:convergence} proves rapid convergence of $\DPGD$ for one single environment. To extend the argument to the whole sequence of environments, we need to ensure (1) the feature matrix is always well-conditioned and (2) catastrophic forgetting does not happen.
For (1), we need to analyse the limiting point of $\DPGD$ (there are infinitely many optimal solutions to Eq.~\eqref{eq:prob-online}), make sure it is well-balance and orthogonal to previous row/column space.
For (2), we make use of the orthogonality of $\DPGD$.
\begin{proof}[Proof Sketch of Theorem \ref{thm:linear}]
Due to the reduction established in Section \ref{sec:reduction}, it suffices to prove Eq.~\eqref{eq:prob-online} and Eq.~\eqref{eq:prob-online1}.
For each environment $i$ ($i \in [k]$), we inductively prove
\begin{enumerate}
    \item $\DPGD$ achieves good accuracy on the current environment, i.e., $\|U_{i, T}v_i - w_i\|_2 \leq \eps \nu$;
    \item The feature mapping matrix $U_{i}$ remains well conditioned, i.e. $\frac{1}{2\sqrt{D}} \leq \sigma_{\min}(U_{i, \mathsf{end}}) \leq\sigma_{\max}(U_{i, \mathsf{end}}) \leq 2\sqrt{D}$.
    \item The algorithm does not suffer from catastrophic forgetting, i.e., $\|U_{i, t}v_j - w_i\|_2 \leq \eps$ for any $j < i$ and $t\in [T]$.
\end{enumerate} 

The first claim is already implied by Lemma \ref{lem:convergence}. For the second claim, one first shows $\DPGD$ exactly recovers $w_i$ by taking $w_i = \hat{w}_i = \mathsf{Round}_{\nu}(U_{i, T}v_i)$. When $w_{i, B} = 0$, one can prove the feature matrix does not change, i.e, $U_{i, \mathsf{end}} = U_{i-1, \mathsf{end}}$; when $w_{i, B}\in [1/D, D]$, then one can show $U_{i, \mathsf{end}} \approx U_{i, \mathsf{end}} + \frac{1}{\|v_{i, 2, T}\|_2^2}w_{i, B}v_{i, 2, T}^{\top}$, as $w_{i, B} \perp \column(U_{i-1, \mathsf{end}}), v_{i, 2, T}\perp \row(U_{i-1, \mathsf{end}})$ and $ \|\frac{1}{\|v_{i, 2, T}\|_2^2}w_{B}v_{i, 2, T}^{\top}\|\leq O(\sqrt{D})$, the feature matrix $U$ remains well-conditioned.
The last claim can be derived from the orthogonality.
This wraps up the proof of Theorem \ref{thm:linear}.
\end{proof}
\section{Lower bound for non-linear features}
\label{sec:lower}

We next consider continual learning under a non-linear feature mapping. Learning with non-linear features turns out to be much more difficult, and our main result is to rule out the possibility of a (proper) continual learner. We restate the formal statement.

\lowerThm*

Our lower bound is constructed on a simple family of two-layer convolutional neural network with quadratic activation functions.
The input distribution is assumed to be uniform and the target function is a polynomial over the input.
The first environment is constructed such that multiple global optimum exist (hence the optimization task is under-constrained).
However, if a wrong optimum solution is picked, when the second environment is revealed, the non-linearity makes it impossible to switch back-and-forth.

\begin{proof}
It suffices to take $k = 2, n=3, d=2$. 
For both environments, we assume the input data are drawn uniformly at random from $\mathcal{B}_3(0, 1)$, where $\mathcal{B}_3(0,1)$ denotes the unit ball in $\R^3$ centered at origin.
The hypothesis class $\mathcal{H}$ consists of all two-layer convolutional neural network with a single kernel of size $2$ and the quadratic activation function. That is, the representation function is parameterized by $w \in \R^2$ and takes the form of $R_w(x) = (\langle w, x_{1:2}\rangle^2, \langle w, x_{2:3}\rangle^2) \in \R^2$, where $x\in \R^{3}$, $x_{i:j} \in \R^{j -i + 1}$ is a vector consists of the $i$-th entry to the $j$-th entry of $x$.  

The hard sequence of environments are drawn from the following distribution.
\begin{itemize}
    \item The objective function $f_1$ of the first environment is $f_1(x) = x_2^2$
    \item The objective function $f_2$ of the second environment equals $f_2(x) = x_3^2$ with probability $1/2$, and equals $f_2(x) = x_1^2$ with probability $1/2$.
\end{itemize}

First, the continual learning task is realizable: (1) if $f_2(x)= x_3^2$, then one can take $w = (0,1)$ and $v_1 = (1, 0), v_{2} = (0,1)$; (2) if $f_2(x) = x_1^2$, then one can take $w = (1, 0)$, $v_1 = (0,1)$, $v_2 = (1, 0)$.

We then prove no (proper) continual learning algorithm can guarantee to achieve less than $1/1000$-error on both environments with probability at least $1/2$.
Suppose the algorithm takes $v_1 = (v_{1,1}, v_{1,2})$ for the first environment. Due to symmetry, one can assume $|v_{1,1}| \geq |v_{1,2}|$.
With probability $1/2$, the objective function of the second environment is $f_2(x) = x_1^2$. Let $v_2 = (v_{2,1}, v_{2,2})$ be the linear prompt and $w = (w_{1}, w_{2})$ be the parameter of neural network. We prove by contradiction and assume
\begin{align*}
    \E_{x\sim \mathcal{B}_{3}(0,1)}[|\langle v_1, R_{w}(x)\rangle - x_2^2|^2] \leq 1/1000 ~~\text{and}~~ \E_{x\sim \mathcal{B}_{3}(0,1)}[|\langle v_2, R_{w}(x)\rangle - x_1^2|^2] \leq 1/1000.
\end{align*}
Let $\Pi_{n}^{d}$ be the space of all polynomial of degree at most $d$ in $n$ variables. By Lemma \ref{lem:tech}, notice that $\langle v_1, R_{w}(x)\rangle, \langle v_2, R_{w}(x)\rangle \in \Pi_{3}^2$, we must have that their coefficients match well with $x_2^2$ and $x_1^2$ respectively (in the sense that the absolute deviation is no larger than $1/4$).

First, compare the polynomials of $\langle v_2, R_{w}(x)\rangle$ and $x_1^2$, we must have (1) $v_{2,1}w_1^2 \geq 3/4$ due to the $x_1^2$ term, and due to the $x_1 x_2^2$ term, one has (2) $|v_{2,1}w_1w_2| \leq 1/4$. These two indicate (3) $|w_1| \geq 3 |w_2|$.
Then compare the polynomials of $\langle v_1, R_{w}(x)\rangle$ and $x_2^2$, we have (4) $|v_{1,1} w_1^2| \leq 1/4$ due to the $x_1^2$ term. Combining (3) and (4), one has (5) $|v_{1,1}w_2^2| \leq \frac{1}{9}|v_{1,1}w_1^2| \leq \frac{1}{36}$. Since the $x_2^2$ term is roughly matched, one must have (6) $|v_{1,2}w_1^2| \geq 1 - \frac{1}{4} - \frac{1}{36} = \frac{13}{18}$. However, note that (4) and (6) contradicts with the assumption that $|v_{1,1}| \geq |v_{1,2}|$. We conclude the proof.
\end{proof}


\section{Conclusion}
\label{sec:discussion}
In this paper, we initiate a study of continual learning through {\em the feature extraction lens}, proposing an efficient gradient based algorithm, $\DPGD$, for the linear case, and a fundamental impossibility result in the general case. 
Our work leaves several interesting future directions. First, it would be interesting to generalize $\DPGD$ to non-linear feature mappings (perhaps even without provable guarantees) and conduct an empirical study of its performance. 
Second, our impossibility result does not rule out an improper continual learner, and in general, one can always maintain a task specific representation function and achieve good performance over all environments. It would be thus interesting to investigate what are the fundamental memory-accuracy trade-offs. Finally, our formulation assumes the task identity is known at test time; generalizing it to unknown task identity is direction for further work.

\clearpage
\newpage
\bibliographystyle{plain}
\bibliography{ref}

\newpage
\appendix
\section{Missing proof from Section \ref{sec:upper}}
\label{sec:upper-app}

\subsection{Missing proof from Section \ref{sec:reduction}}

We first present the proof of Lemma \ref{lem:equiv1}
\begin{proof}[Proof of Lemma \ref{lem:equiv1}]
For any $i \in [k]$, the gradient of feature matrix $U$ w.r.t. objective Eq.~\eqref{eq:acc1} equals
\begin{align}
\nabla_{U} = &~ \E_{(x, y) \sim \D_i}[x(x^{\top}Uv_i - y)v_i^{\top}] = \E_{x\sim \D_i}[x(x^{\top}Uv_i - x^{\top}w_i)v_i^{\top}]
= (Uv_i - w_i) v_i^{\top}.\label{eq:eqv1}
\end{align}
The first step follows from $y = x^{\top}w_i$ for any $(x, y)\sim \D_i$ and the second step follows from $\E_{x_i \sim \D_i}[xx^{\top}] = I_n$. The RHS of the above equation exactly equals the gradient of Eq.~\eqref{eq:prob-online} for $U$ (before and after projection to $\W_{i-1}$). 

We next observe
\begin{align}
\nabla_{v_i} = \E_{(x, y)\sim \D_i}[U^{\top}x(x^{\top}Uv_i - y)] = \E_{x\sim \D_i}[U^{\top}x(x^{\top}Uv - x^{\top}w_i)] =  U^{\top}(Uv_i - w_i), \label{eq:eqv2}
\end{align}
and the RHS of the above equation matches the gradient of Eq.~\eqref{eq:prob-online} for $v_i$. We conclude the proof here.
\end{proof}

We then include the proof of Lemma \ref{lem:equiv1}
\begin{proof}[Proof of Lemma \ref{lem:equiv2}]
We have
\begin{align*}
     \frac{1}{2}\E_{(x, y) \sim \D_i}(x^{\top}U v_i - y)^2 = &~ \frac{1}{2}\E_{(x, y) \sim \D_i}(x^{\top}U v_i - x^{\top}w_i)^2\\
     =&~ \frac{1}{2}(Uv_i - w)^{\top}\E_{x\sim\D_i}[xx^{\top}](Uv_i - w)^{\top} \\
     =&~ \frac{1}{2}\|Uv_i - w_i\|_2^2 \leq \eps.
\end{align*}
where the first step follows from $y = x^{\top}w_i$ for any $(x, y)\sim \D_i$ and the third step follows from $\E_{x_i \sim \D_i}[xx^{\top}] = I_n$. 
This concludes the proof.
\end{proof}

\subsection{Missing proof from Section \ref{sec:decomposition}}

We first present the proof of Lemma \ref{lem:decomp}
\begin{proof}[Proof of Lemma \ref{lem:decomp}]
For the first term, when $t=0$, one has $\column(U_{i, A, 0}) \in \W_{i-1}$ and $\column(U_{i, B, 0}) \in \W_{i-1,\perp}$, and these indicate (1) $U_{i, A, 0} = U_{i-1, \mathsf{end}}$, $\row(U_{i-1, \mathsf{end}}) \in \V_{i-1}$ and (2) $U_{i, B, 0} = U_{i, \init}$, $\row(U_{i, \init}) \in \V_{i-1, \perp}$. Hence we conclude $\row(U_{i, A, 0}) \in \V_{i-1}$ and $\row(U_{i, A, 0}) \in \V_{i-1, \perp}$. Since the gradient update is perform along $\W_{i-1, \perp}$ and $\V_{i-1, \perp}$, one has $U_{i, A}$ remains unchanged, i.e., $U_{i, A,t} = U_{i, A, 0}$ ($t\in [T]$), and the update of $U_{i, B, t}$ is along $V_{i-1, \perp}$, hence $\row(U_{i, B, t}) \in \V_{i-1, \perp}$ continues to hold.

For the third term, for any $t\in [0:T]$, one has
\[
\V_{i-1, \perp} \ni w_{i, B}^{\top} U_{i, B, t} = w_{i, B}^{\top}w_{i, B}x_{i, t}^{\top} + w_{i, B}^{\top}U_{i, 2, t} = \|w_{i, B}\|_2^2 x_{i,t}^{\top},
\]
where the second step follows from $\column(U_{i, 2,t}) \in \W_{i-1, \perp}\backslash \{w_{i, B} \}$. Hence we conclude $x_{i, t} \in V_{i-1, \perp}$. Since $\row(U_{i, B, t}), \row(w_{i, B} x_{i, t}^{\top}) \in \V_{i-1, \perp}$, one has $\row(U_{i, 2, t}) \in V_{i-1 , \perp}$.
\end{proof}

We then prove
\begin{proof}[Proof of Lemma \ref{lem:grad}]
The gradient of $U$ (before projection) satisfies
\begin{align*}
    \nabla_{U_{i}} = &~ (U_{i}v_i - w_i)v_{i}^{\top}\\ 
    =&~  U_{i, A} v_{i}v_{i}^{\top} + U_{i, B} v_{i}v_{i}^{\top} - w_{i, A} v_{i}^{\top} - w_{i, B} v_{i}^{\top} \\
    = &~ (U_{i, A} v_{i}v_{i}^{\top} - w_{i, A}v_{i}^{\top}) + (w_{i,B} x_{i}^{\top} + U_{i,2}) v_{i}v_{i}^{\top} - w_{i,B}v_{i}^{\top}\\
    = &~ (U_{i, A} v_{i}v_{i}^{\top} - w_{i, A}v_{i}^{\top}) + w_{i, B} v_{i}^{\top}(x_{i}^{\top}v_{i} - 1) + U_{i,2} v_{i}v_{i}^{\top},
\end{align*}
where the first step follows from Eq.~\eqref{eq:eqv1}, the second and third steps follow from the first three terms of Lemma \ref{lem:decomp}. 

The actual update (after projection) obeys
\begin{align*}
    P_{\W_{i-1,\perp}}\nabla_{U_{i}}P_{\V_{i-1,\perp}} = &~ P_{W_{i-1,\perp}} ((U_{i, A} v_{i}v_{i}^{\top} - w_{i,A}v_{i}^{\top}) + w_{i, B} v_{i}^{\top}(x_{i}^{\top}v_{i} - 1) + U_{i, 2} v_{i}v_{i}^{\top}) P_{\V_{i-1\perp}}\\
    = &~ (w_{i,B} v_{i}^{\top}(x_{i}^{\top}v_{i} - 1) + U_2 v_{i}v_{i}^{\top}) P_{\V_{i-1\perp}}\\
    = &~ w_{i, B}  v_{i,2}^{\top} (x_{i}^{\top}v_{i,2} - 1)   + U_{i,2} v_{i, 2}  v_{i,2}^{\top},
\end{align*}
where the second step follows from $w_{i,A}, \column(U_{i,A}) \in \W_{i-1}$, the third step follows from $\row (U_{i,2}) \in \V_{i-1,\perp}$ and $x_{i} \in \V_{i-1, \perp}$, see Lemma \ref{lem:decomp} for details.

Hence, we conclude
\begin{align*}
\nabla_{x_{i}} = &~ v_{i, 2} (x_i^{\top}v_{i,2} - 1) \quad \text{and} \quad \nabla_{U_{i,2}} = U_{i, 2}v_{i, 2}v_{i, 2}^{\top}.
\end{align*}

We next calculate the gradient of $v$, it satisfies
\begin{align*}
    \nabla_{v_i} =&~ U_{i}^{\top}(U_{i}v_{i} - w_i)\\
    =&~ U_{i,A}^{\top}U_{i,A}v_{i} + U_{i,B}^{\top}U_{i,B}v_{i} - U_{i,A}^{\top}w_{i,A} - U_{i,B}^{\top} w_{i,B}\\
    = &~ U_{i,A}^{\top}U_{i,A}v_{i,1} - U_{i,A}^{\top} w_{i,A} +  U_{i,B}^{\top}U_{i,B}v_{i,2} -  U_{i,B}^{\top}w_{i,B}.
\end{align*}
The first step follows from Eq.~\eqref{eq:eqv2}, the second step follows from the first two terms of Lemma \ref{lem:decomp}.
The third step uses the fact that $\mathsf{row}(U_{i,A}) \in \V_{i-1}$, $v_{i,1} \in \V_{i-1}$, $v_{i,2} \in \V_{i-1,\perp}$ and $\mathsf{row}(U_{i,B}) \in \V_{i-1,\perp}$

Hence, we have
\begin{align*}
\nabla_{v_{i,1}} = U_{i,A}^{\top}U_{i,A}v_{i,1} - U_{i,A}^{\top} w_{i,A} 
\end{align*}
and 
\begin{align*}
    \nabla_{v_{i,2}} = &~ U_{i,B}^{\top}U_{i,B}v_{i,2} -  U_{i,B}^{\top}w_{i,B} \\
    = &~ (w_{i,B} x_{i}^{\top} + U_{i,2})^{\top}(w_{i,B} x_{i}^{\top} + U_{i,2})v_{i,2} - (w_{i,B} x_{i}^{\top} + U_{i,2})^{\top}w_{i,B} \\
    = &~ x_i \|w_{i,B}\|_2^2 x_{i}^{\top}v_{i,2} + U_{i,2}^{\top}U_{i,2}v_{i,2} - x_{i} \|w_{i,B}\|_2^2 \\
    = &~ \|w_{i,B}\|_2^2(x_{i}^{\top}v_{i} - 1)x_i + U_{i,2}^{\top}U_{i,2}v_{i,2},
\end{align*}
where the third step holds due to $w_{i,B} \perp \column(U_{i,2})$. We conclude the proof here. 
\end{proof}

Finally, we prove
\begin{proof}[Proof of Lemma \ref{lem:loss-formula}]
For any $i \in [k], t \in[T]$, we have
\begin{align*}
    \|U_{i,t}v_{i,t} - w_{i}\|_2^2 = &~ \|(U_{i,A} + U_{i, B,t})(v_{i,1,t} + v_{i, 2, t}) - w_{i, A} - w_{i, B} \|_2^2\\ 
    = &~ \|U_{i, A}v_{i, 1,t} +  U_{i, B,t} v_{i, 2, t} - w_{i, A} - w_{i, B} \|_2^2\\
    = &~\|U_{i, A, t}v_{i,1, t} - w_{i, A}\|_2^2 + \| U_{i, B, t} v_{i, 2, t} - w_{i, B}\|_2^2\\
    = &~  \|U_{i,A}v_{i,1, t} - w_{i, A}\|_2^2 + \| (w_{i,B}x_{i, t}^{\top} + U_{i, 2, t}) v_{i, 2, t} - w_{i, B}\|_2^2 \\
    = &~\|U_{i,A}v_{i, 1, t} - w_{i,A}\|_2^2 + \| w_{i, B}\|_2^2 (x_{i, t}^{\top}v_{i, 2,t} - 1)^2 + \|U_{i, 2, t}v_{i, 2, t}\|_2^2.
\end{align*}
The second step follows from $\row(U_{i,A}) \in \V_{i-1}$, $\row(U_{i,B}) \in \V_{i-1, \perp}$, $v_{i, 1, t} \in \V_{i-1}$, $v_{i, 2, t} \in \V_{i-1,\perp}$, the third step follows from $U_{i,A}v_{i, 1, t} - w_{i, A} \in \W_{i-1}$ and $U_{i, B, t}v_{2, t} - w_{i, B} \in \W_{i-1,\perp}$. The last step follows from $w_{i, B} \perp \column(U_{i, 2, t})$.
\end{proof}

\subsection{Missing proof from Section \ref{sec:conv}}

In the proof, we write $x = y \pm z$ if $x \in [y - z, y+z]$. For simplicity, we assume $\log(1/\eps\nu) \ll k, d$.

\begin{proof}[Proof of Lemma \ref{lem:fast}]
This follows easily from the standard analysis of gradient descent for least square regressions. For any $t \in [0: T-1]$, one has
\begin{align*}
    \|U_{i, A}v_{i, 1,t+1} - w_{i, A}\|_2 = &~ \|U_{i, A} (v_{i, 1,t} - \eta (U_{i, A}^{\top}U_{i, A}v_{i, 1,t} - U_{i, A}^{\top}w_{i,A})) - w_{i, A}\|_2\\
    = &~ \|(I - \eta U_{i,A}U_{i, A}^{\top})(U_{i,A}v_{i, 1,t} - w_{i, A})\|_2 \\
    \leq &~ (1- \frac{\eta}{4D}) \|(U_{i, A}v_{i, 1,t} - w_{i, A})\|_2.
\end{align*}

The first step follows from the gradient update formula (see Lemma \ref{lem:grad}), the third step follows from $U_{i, A}v_{i, 1, t} - w_{i, A} \in \column(U_{i, A})$, and $2\sqrt{D} \geq \sigma_{\max}(U_{i, A}) \geq \sigma_{\min}(U_{i, A}) \geq \frac{1}{2\sqrt{D}}$ and $\eta < \frac{1}{4D}$. We conclude the proof here.
\end{proof}

We next proceed to the proof of Lemma \ref{lem:stage-one}
\begin{proof}[Proof of Lemma \ref{lem:stage-one}]
Recall our goal is to prove
\begin{enumerate}
    \item $\|\|w_{i, B}\|_2 x_{i, t} - v_{i, 2, t}\|_2 \leq O(r\sigma \log(k/\sigma))$,
    \item $ x_{i, t}^{\top}v_{i, 2, t} < 0.9$ when $t < T_1$ and $0.9 <  x^{\top}_{i, T_1}v_{i, 2, T_1} < 1$,
    \item $U_{i, 2,t}^{\top}U_{i, 2, t} \preceq U_{i, 2, 0}^{\top}U_{i, 2, 0}$.
\end{enumerate}

We inductively prove these three claims. For the base case, we have that 
\begin{align}
x_{i, 0} = &~ \frac{1}{\|w_{i, B}\|_2^2}w_{i, B}^{\top}U_{i, B, 0} = \frac{1}{\|w_{i, B}\|_2^2}w_{i, B}^{\top}P_{\W_{i-1, \perp}}U_{i,  \init}P_{\V_{i-1, \perp}} =\frac{1}{\|w_{i, B}\|_2^2}w_{i, B}^{\top}U_{i,  \init}P_{\V_{i-1,\perp}} \notag\\
\approx &~ \frac{\sigma}{\|w_{i, B}\|_2}\cdot   \mathsf{rand}(r,1) P_{\V_{i-1\perp}}, \label{eq:x_init}
\end{align}
where in the first step we use the fact that $U_{i, B, 0} = w_{i, B} x_{i, 0}^{\top} + U_{i, 2, 0}$, $w_{i, B} \perp \column(U_{i, 2, 0})$, in the third step, we use $w_{i, B} \in \W_{i-1, \perp}$. The fourth step follows from $\frac{1}{\|w_{i, B}\|_2^2}w_{i, B}^{\top}U_{i, \init}$ is a random Gaussian vector with variance $\frac{\sigma}{\|w_{i, B}\|_2}$.
Similarly, we have
\begin{align}
    v_{i, 2, 0} = \sigma \cdot \mathsf{rand}(r,1) P_{\V_{i-1, \perp}}. \label{eq:v_init}
\end{align}
Hence, with probability at least $1 - O(1/k)$, we have
\begin{align*}
    \|\|w_{i, B}\|_2 x_{i, 0} - v_{i, 2, 0}\|_2 \leq O(r\sigma\log(k)) \quad \text{and} \quad x_{i, 0}^{\top}v_{i, 2,0} < O(\sigma^2 r D \log(k))\ll 1.
\end{align*}

We have proved the base case. Now suppose the induction holds up to time $t$, for the $(t+1)$-th iteration, 
we first go over the first claim. One has
\begin{align}
    &~ \|\|w_{i, B}\|_2x_{i, t + 1} - v_{i, 2, t + 1}\|_2^2 - \|\|w_{i, B}\|_2 x_{i, t} - v_{i, 2, t}\|_2^2 \notag\\
    = &~\|\|w_{i, B}\|_2(x_{i, t} - \eta v_{i, 2,t}(x_{i, t}^{\top}v_{i, 2,t} - 1)) - (v_{i, 2, t} -\eta x_{i, t} \|w_{i, B}\|_2^2(x_{i, t}^{\top}v_{i, 2,t} - 1) - \eta U_{i, 2,t}^{\top}U_{i, 2,t}v_{i, 2,t})\|_2^2 \notag\\
    &~ - \|\|w_{i, B}\|_2x_{i, t} - v_{i, 2, t}\|_2^2\notag\\
    = &~ \|(\|w_{i, B}\|_2 x_{i, t} - v_{i, 2,t}) - \eta v_{i, 2,t}\|w_{i, B}\|_2(x_{i, t}^{\top}v_{i, 2,t} - 1) + \eta x_{i, t} \|w_{i, B}\|_2^2(x_{i, t}^{\top}v_{i, 2,t} - 1) + \eta U_{i, 2,t}^{\top}U_{i, 2,t}v_{i, 2,t}\|_2^2 \notag \\
    &~ - \|\|w_{i, B}\|_2 x_{i, t} - v_{i, 2, t}\|_2^2 \notag\\
    = &~ 2\eta \langle \|w_{i, B}\|_2 x_{i, t} - v_{i, 2,t}, x_{i, t} \|w_{i,B}\|_2^2(x_{i, t}^{\top}v_{i, 2,t} - 1) - v_{i, 2}\|w_{i, B}\|_2 (x_{i, t}^{\top}v_{i, 2,t} - 1) + U_{i, 2,t}^{\top}U_{i, 2,t}v_{i, 2,t}\rangle\notag\\
    &~ \pm O(\eta^2 D^4) \notag\\
    = &~ 2\eta \|w_{i,B}\|_2(x_{i, t}^{\top}v_{i, 2,t} - 1) \|\|w_{i, B}\|_2x_{i, t} - v_{i, 2,t}\|_2^2 +\eta\langle \|w_{i, B}\|_2 x_{i, t} - v_{i, 2,t}, U_{i, 2,t}^{\top}U_{i, 2,t}v_{i, 2,t}\rangle\notag\\
    &~ \pm O(\eta^2 D^4) \label{eq:indc_mid}\\
    \leq &~ 2\eta\langle \|w_{i, B}\|_2 x_{i, t} - v_{i, 2,t}, U_{i, 2,t}^{\top}U_{i, 2,t}v_{i, 2,t}\rangle \pm O(\eta^2 D^4)\notag\\
    \leq &~ \tilde{O}(\eta r \sigma^3 d^2 D) + O(\eta^2 D^4). \label{eq:indc1}
\end{align}
The first step follows from the gradient update formula (see Lemma \ref{lem:grad}), the third step follows from that 
\begin{align*}
 &~\|U_{i, 2,t}^{\top}U_{i, 2,t} v_{i, 2,t}\|_2 \ll 1, \quad \|\|w_{i, B}\|_2^2(x_{i, t}^{\top}v_{i, 2,t} - 1) x_{i, t}\|_2 \leq O(D^2)\\
\end{align*}
and 
\begin{align*}
\|\|w_{i, B}\|_2(x_{i, t}^{\top}v_{i, 2,t} - 1)v_{i, 2,t}\|_2 \leq O(D^2),
\end{align*}
which can be derived easily from the induction hypothesis.
The fifth step follows from $x_{i, t}^{\top}v_{i, 2,t}  < 1$ when $t \leq T_1$.
The last step follows from 
\begin{align}
\|\|w_{i, B}\|_2x_{i, t} - v_{i, 2,t}\| \leq \tilde{O}(r\sigma),  \|U_{i, 2,t}^{\top}U_{i, 2,t}\| \leq \|U_{i, 2,0}^{\top}U_{i, 2,0}\| \leq  \tilde{O}(d^2\sigma^2), \|v_{i, 2,t}\|_2 \leq O(D), \label{eq:detail1}
\end{align}
which can be derived easily from the induction hypothesis.
Combining with $\eta \leq \frac{\sigma^2}{D^5}$, $\sigma \leq \frac{1}{D^2 d^2}$ and the total number of iteration is $T_1 \leq O(\frac{D}{\eta}\log \frac{k}{\sigma})$, one can proved the first claim.

For the second claim, we have that
\begin{align}
    &~x_{i, t+1}^{\top}v_{i, 2,t+1} - x_{i, t}^{\top}v_{i, 2,t}\notag \\
    = &~ (x_{i, t} - \eta v_{i, 2,t}(x_{i, t}^{\top}v_{i, 2,t} - 1))^{\top}(v_{i, 2,t} - \eta x_{i, t} \|w_{i, B}\|_2^2(x_{i, t}^{\top}v_{i, 2,t} - 1) - \eta U_{i, 2,t}^{\top}U_{i, 2,t} v_{i, 2,t}) - x_{i, t}^{\top}v_{i, 2,t}\notag \\
    = &~ - \eta (\|w_{i, B}\|_2^2 \|x_{i, t}\|_2^2 + \|v_{i, 2,t}\|_2^2)(x_{i, t}^{\top}v_{i, 2,t} - 1) - \eta x_{i, t}^{\top}U_{i, 2,t}^{\top}U_{i, 2, t} v_{i, 2,t} \pm O(\eta^2 D^3)  \label{eq:grad_xv} \\
    \geq &~ \frac{1}{2}\eta (\|w_{i, B}\|_2^2 \|x_{i, t}\|_2^2 + \|v_{i, 2,t}\|_2^2)(x_{i, t}^{\top}v_{i, 2,t} - 1) - O(\eta^2 D^3) \notag\\
    \geq &~ \frac{1}{20}\eta(\|w_{i, B}\|_2^2 \|x_{i, t}\|_2^2 + \|v_{i, 2,t}\|_2^2) - O(\eta^2D^3).\label{eq:indc2}
\end{align}
The first step follows from the gradient update formula (see Lemma \ref{lem:grad}), the second step holds since
\[
\|v_{i, 2,t}(x_{i, t}^{\top}v_{i, 2,t} - 1))\|_2 \leq O(D), \quad \| \|w_{i, B}\|_2^2(x_{i, t}^{\top}v_{i, 2,t} - 1)x_{i, t}\|_2 \leq O(D^2) \quad \text{and} \quad \|U_{i, 2,t}^{\top}U_{i, 2,t}v_{i, 2,t}\|_2 \ll 1.
\]
Again, these inequalities can be derived easily from the inductive hypothesis.
The third step holds since $U_{i, 2,t}^{\top}U_{i, 2,t} \preceq U_{i, 2,t}^{\top}U_{i, 2,t}  \preceq \tilde{O}(d^2\sigma^2)\cdot I$, and therefore,
\[
|x_{i, t}^{\top}U_{i, 2,t}^{\top}U_{i, 2, t} v_{2,t}| \leq \tilde{O}(d^2 \sigma^2) \cdot \|x_{i, t}\|_2\|v_{i, 2,t}\|_2  \ll |(\|w_{i, B}\|_2^2 \|x_{i, t}\|_2^2 + \|v_{i, 2,t}\|_2^2)(x_{i, t}^{\top}v_{i, 2,t} - 1) | .
\]
The last step uses the fact that $x_{i, t}^{\top}v_{2,t} < 0.9$ when $t < T_1$.

We next bound the RHS of Eq.~\eqref{eq:indc2} and prove it can not be too small. We focus on $\|\|w_{i, B}\|_2 x_{i, t+1} + v_{i, 2, t+1}\|_2$ and prove it monotonically increasing. In particular, at initialization, with probability at least $1 - O(1/k)$, due to anti-concentration of Gaussian, we have
\begin{align}
\|\|w_{i, B}\|x_{i, 0} + v_{i, 2,0}\|_2 \approx \sigma \|\mathsf{rand}(r,1) P_{\V_{i-1,\perp}}\|_2 \geq \sigma/k.
\end{align}
Furthermore, we have
\begin{align}
    &~ \|\|w_{i-1,B}\|_2 x_{i, t+1} + v_{i, 2, t+1}\|_2^2 \notag \\
    =&~ \|\|w_{i-1, B}\|_2 (x_{i, t} - \eta v_{i, 2,t}(x_{i, t}^{\top}v_{i, 2,t} - 1)) + (v_{i, 2, t} - \eta x_{i, t}\|w_{i, B}\|_2^2(x_{i, t}^{\top}v_{i, 2,t} - 1) - \eta U_{i, 2,t}^{\top}U_{i, 2,t} v_{i, 2,t}) \|_2^2 \notag \\
    = &~ \|\|w_{i, B}\|_2 x_{i, t} + v_{i, 2, t}\|_2^2 + 2\eta \|w_{i, B}\|_2(1- x_{i, t}^{\top}v_{i, 2,t})\| \|w_{i, B}\|_2 x_{i, t} + v_{i, 2, t}\|_2^2 \notag \\
    &~ + \eta\langle \|w_{i, B}\|_2x_{i, t} + v_{i, 2,t}, U_{i, 2,t}^{\top}U_{i, 2,t} v_{i, 2,t} \rangle \pm O(\eta^2D^4)\notag\\
    \geq &~ (1 + \frac{1}{20}\eta \|w_{i, B}\|_2) \|w_{i, B} x_{i, t} + v_{i, 2, t}\|_2^2, \label{eq:monotone}
\end{align}
where the first step holds due to the gradient update formula (see Lemma \ref{lem:grad}), the second step holds due to Eq.~\eqref{eq:detail1}. 
The last step holds since 
\begin{align*}
\|U_{i, 2,t}^{\top}U_{i, 2,t} v_{i, 2,t}\|_2 \leq &~ \tilde{O}(d^2\sigma^2 D) \ll \frac{\sigma}{40 kD} \leq \frac{1}{40}\|w_{i, B}\|_2\cdot \|\|w_{i, B}\|_2x_{i, 0} + v_{i, 2,0}\|_2\\
\leq &~ \frac{1}{40}\|w_{i, B}\|_2\cdot \|\|w_{i, B}\|_2x_{i, t} + v_{i, 2,t}\|_2
\end{align*}
and 
\[
O(\eta D^4) \ll \frac{\sigma^2}{40 k^2D} \leq \frac{1}{40}\|w_{i, B}\|_2\cdot \|\|w_{i, B}\|_2x_{i, 0} + v_{i, 2,0}\|_2^2 \leq \frac{1}{40}\|w_{i, B}\|_2\cdot \|\|w_{i, B}\|_2x_{i, t} + v_{i, 2,t}\|_2^2.
\]

Hence, we conclude that $\|\|w_{i, B}\|_2x_{t} + v_{2,t}\|_2$ is monotonically increasing, and in particular, 
\begin{align*}
\|\|w_{i, B}\|_2 x_{i, t} + v_{i, 2, t}\|_2^2 \geq &~  \|\|w_{i, B}\|_2 x_{i, 0} + v_{i, 2, 0}\|_2^2 = \Omega(\sigma^2/k^2)  &~ \forall t\in [T_1]\\
\|\|w_{i, B}\|_2 x_{i, t} + v_{i, 2, t}\|_2^2 \geq &~  \Omega(1)  &~ t \geq O(\frac{D}{\eta}\log\frac{k}{\sigma})
\end{align*}
The second inequality follows from Eq.~\eqref{eq:monotone}.
Plugging into Eq.~\eqref{eq:indc2}, one has 
\begin{align*}
    x_{i, t+1}^{\top}v_{i, 2,t+1} - x_{i, t}^{\top}v_{i, 2,t} \geq &~  \frac{1}{20}\eta(\|w_{i, B}\|_2^2 \|x_{i, t}\|_2^2 + \|v_{i, 2,t}\|_2^2) - O(\eta^2 D^3)\\
    \geq &~  \frac{1}{40}\eta(\|\|w_{i, B}\|_2 x_{i, t}+ v_{i, 2,t}\|_2^2) - O(\eta^2 D^3)\\
    \geq &~ \left\{
    \begin{matrix}
    0 & t \in [T]\\
    \Omega(\eta) & t \geq  O(\frac{D}{\eta}\log\frac{k}{\sigma})
    \end{matrix}
    \right.
\end{align*}

Hence, after at most $T_1 \leq  O(\frac{D}{\eta}\log\frac{k}{\sigma})$ iterations, we have $0.9 \leq x_{i, T_1}^{\top}v_{i, 2,T_1} < 1$. It would not exceed $0.9$ too much since by Eq.~\eqref{eq:grad_xv}, the change per iteration is at most
\begin{align}
|x_{i, t+1}^{\top}v_{i, 2,t+1} - x_{i, t}^{\top}v_{i, 2,t} | \lesssim &~ \eta (\|w_{i, B}\|_2^2 \|x_{i, t}\|_2^2 + \|v_{i, 2,t}\|_2^2)\notag\\ 
\leq &~ \eta (\|\|w_{i, B}\|_2x_{t} - v_{i, 2,t}\|_2^2 + 2\|w_{i, B}\|_2 x_{i, t}^{\top}v_{i, 2,t}) \leq 4\eta D  \ll 1 \label{eq:smooth}
\end{align}

For the third claim, we have 
\begin{align*}
    U_{i, 2,t+1}^{\top}U_{i, 2,t+1} = (U_{i, 2,t} - \eta U_{i, 2,t}v_{i, 2,t}v_{i, 2,t}^{\top})^{\top}(U_{i, 2,t} - \eta U_{i, 2,t}v_{i, 2,t}v_{i, 2,t}^{\top}) \preceq U_{i, 2,t}^{\top} U_{i, 2,t}.
\end{align*}
The last step holds since $(I - v_{i, 2,t}v_{i, 2,t}^{\top})$ is a PSD matrix and  $(I - v_{i, 2,t}v_{i, 2,t}^{\top}) \preceq I$. We have proved all three claims.
\end{proof}

\begin{proof}[Proof of Lemma \ref{lem:stage_two}]
For the $t$-th iteration ($t \in [T_1: T_2]$), we prove the following claims inductively.
\begin{enumerate}
    \item $|x_{i, t}^{\top}v_{i, 2,t}- 1| \leq \frac{1}{2}(1 - \frac{\eta}{4D})^{t - T_1}$,
    \item $\|U_{i, t}v_{i, t}\|_2 \leq (1 - \frac{\eta}{4D})^{t - T_1}$,
    \item $\|\|w_{i, B}\|_2 x_{i, t} - v_{i, 2, t}\|_2 \leq \tilde{O}(r\sigma$).
\end{enumerate}

The inductive base ($t= T_1$) holds trivially. Assuming the hypothesis holds up to time $t$, we start from the first claim. We have that
\begin{align*}
    &~ (1 - x_{i, t+1}^{\top}v_{i, 2,t+1}) - (1 - x_{i, t}^{\top}v_{i, 2,t})\\
    = &~ -(x_{i, t} - \eta v_{i, 2,t}(x_t^{\top}v_{i, 2,t} - 1))^{\top}(v_{i, t} - \eta x_{i, t}\|w_{i, B}\|_2^2(x_{i, t}^{\top}v_{i, 2,t} - 1) - \eta U_{i, 2,t}^{\top}U_{i, 2,t} v_{i, 2,t}) + x_{i, t}^{\top}v_{i, 2,t} \\
    = &~  \eta (\|w_{i, B}\|_2^2 \|x_{i, t}\|_2^2 + \|v_{i, 2,t}\|_2^2)(x_{i, t}^{\top}v_{i, 2,t} - 1) + \eta x_{i, t}^{\top}U_{i, 2,t}^{\top}U_{i, 2, t} v_{i, 2} \pm O(\eta^2 D^3 |x_{i, t}^{\top}v_{i, 2,t} - 1|).
\end{align*}
The first step follows from the gradient update formula (see Lemma \ref{lem:grad}), the second step follows from
\[
\|w_{i, B}\|_2^2 \|x_{i, t}\| \leq O(D^2), \quad \|v_{i, 2,t}\|_2 \leq D \quad \text{and} \quad \|U_{i, 2,t}v_{i, 2,t}\|_2 \ll 1.
\]
Since
\[
\|w_{i, B}\|_2^2 \|x_{i, t}\|_2^2 + \|v_{i, 2,t}\|_2^2 = \|\|w_{i, B}\|x_{i, t} - v_{i, 2,t}\|_2^2 + 2\langle \|w_{i, B}\|_2x_{i, t}, v_{i, 2,t}\rangle  \geq \frac{1}{D} \]
holds due to our inductive hypothesis, we further have that
\begin{align}
\label{eq:indc_4}
   |1 - x_{i, t+1}^{\top}v_{i, 2,t+1}| \leq (1 - \frac{\eta}{D}) |1 - x_{i, t}^{\top}v_{i, 2,t}| + \eta |x_{i, t}^{\top}U_{i, 2,t}^{\top}U_{i, 2, t} v_{i, 2, t}| \pm O(\eta^2D^3 |x_{i, t}^{\top}v_{i, 2,t} - 1|).
\end{align}

{\noindent \bf Case 1.  } Suppose $\frac{1}{2}(1- \frac{\eta}{4D})^{t+2 - T_1} \leq |x_{i, t}^{\top}v_{i, 2,t}- 1| \leq \frac{1}{2}(1- \frac{\eta}{4D})^{t - T_1}$, 
then we have 
\begin{align*}
|1 - x_{i, t+1}^{\top}v_{i, 2,t+1}| \leq (1 - \frac{\eta}{4D}) |1 - x_{i, t}^{\top}v_{i, 2,t}| \leq \frac{1}{2}(1 - \frac{\eta}{4D})^{t+1- T_1}.
\end{align*}
This holds due to Eq. \eqref{eq:indc_4}, $\eta D^3 \ll \frac{1}{4D}$ and
\begin{align*}
|x_{i, t}^{\top}U_{i, 2,t}^{\top}U_{i, 2, t} v_{i, 2,t}| \leq \|x_{i, t}^{\top}U_{i, 2,t}^{\top}\|_2 \|U_{i, 2, t} v_{i, 2,t} \|_2 \leq \tilde{O}(Dd\sigma) \cdot 2 |x_{i, t}^{\top}v_{i, 2,t} - 1| \leq \frac{1}{4D}|x_{i, t}^{\top}v_{i, 2,t} - 1|,
\end{align*}
where the second step holds due to the induction hypothesis.

{\noindent \bf Case 2. } Suppose $|x_{i, t}^{\top}v_{i, 2,t}- 1| \leq \frac{1}{2}(1- \frac{\eta}{4D})^{t+2 - T_1}$, then we have
\begin{align*}
    \eta |x_{i, t}^{\top}U_{i, 2,t}^{\top}U_{i, 2, t} v_2| \pm O(\eta^2 |x_{i, t}^{\top}v_{i, 2,t} - 1| D^3) \leq &~ \eta \cdot \tilde{O}( Dd\sigma) \cdot (1- \frac{\eta}{4D})^{t-T_1} + O(\eta^2D^3)\cdot (1 - \frac{\eta}{4D})^{t-T_1} \\
    \leq &~ \frac{1}{2}(1 - \frac{\eta}{4D})^{t+1 - T_1}\cdot \frac{\eta}{4D},
\end{align*}
where the first step holds due to induction hypothesis and
\[
|x_{i, t}^{\top}U_{i, 2,t}^{\top}U_{i, 2, t} v_{i, 2}|  \leq \|x_{i, t}^{\top}U_{i, 2,t}^{\top}\|_2 \|U_{i, 2, t} v_{i, 2,t} \|_2  \leq \tilde{O}(Dd\sigma) \cdot(1- \frac{\eta}{4D})^{t-T_1}.
\]
Therefore
\begin{align*}
   |1 - x_{i, t+1}^{\top}v_{i, 2,t+1}| \leq \frac{1}{2}(1- \frac{\eta}{4D})^{t+2 - T_1} + \frac{1}{2}(1 - \frac{\eta}{4D})^{t+1 - T_1}\cdot \frac{\eta}{4D} = \frac{1}{2}(1 - \frac{\eta}{4D})^{t+1 - T_1}.
\end{align*}

Next, we prove the second claim. We have
\begin{align}
    &~ \|U_{i, 2,t+1}v_{i, 2,t+1}\|_2 \notag \\
    = &~ \|(U_{i, 2,t} - \eta U_{i, 2,t} v_{i, 2,t}v_{i, 2,t}^{\top})(v_{i, 2,t} - \eta x_{i, t}\|w_{i, B}\|_2^2(x_{i, t}^{\top}v_{i, 2,t} - 1) - \eta U_{i, 2,t}^{\top}U_{i, 2,t}v_{i, 2,t}) \|_2 \notag \\
    \leq &~ \|U_{i, 2,t}v_{i, 2,t}(1 - \eta v_{i, 2,t}^{\top}v_{i, 2,t}) - \eta U_{i, 2,t}U_{i, 2,t}^{\top}U_{i, 2,t}v_{i, 2,t}\|_2 + \eta\|w_{i, B}\|_2^2 |x_{i, t}^{\top}v_{i, 2,t} - 1| \| U_{i, 2,t}x_{i, t} \|_2 \notag \\
    &~ \pm O(\eta^2 D^3) \cdot \|U_{i, 2,t}v_{i, 2,t}\|_2\notag \\
    \leq &~ (1 - 5\eta \|v_{i, 2,t}\|_2^2/6)\|U_{i, 2,t}v_{i, 2,t}\|_2 + \eta\|w_{i, B}\|_2^2|x_{i, t}^{\top}v_{i, 2,t} -  1|\| U_{i, 2,t}x_{i, t} \|_2 \notag \\
    \leq &~ (1 - \frac{\eta}{3D})\|U_{i, 2,t}v_{i, 2,t}\|_2 + \eta\|w_{i, B}\|_2^2|x_{i, t}^{\top}v_{i, 2,t} -  1|\| U_{i, 2,t}x_{i, t} \|_2 , \label{eq:indc_5}
\end{align}
where the first step follows from the gradient update rule (Lemma \ref{lem:grad}), the second step holds due to triangle inequality and 
\[
\|v_{i, 2,t}\|_2 \leq O(D), \quad |x_{i, t}^{\top}v_{i, 2,t} - 1|\|w_{i, B}\|_2^2\|x_{i, t}\|_2 \leq O(D^2) \quad \text{and} \quad \|U_{i, 2,t}^{\top}U_{i, 2,t}v_{i, 2,t}\| \ll 1,
\]
the third step holds due to $\eta D^3 \leq \|v_{i, 2,t}\|_2^2/6$ and the last step holds since 
\begin{align*}
\|v_{i, 2,t}\|_2^2 = &~ v_{i, 2,t}^{\top} (\|w_{i, B}\|_2x_{i, t} + v_{i, 2,t} - \|w_{i, B}\|_2 x_{i, t} )\\
\geq &~ \|w_{i, B}\|_2 x_{i, 2,t}^{\top}v_{i, 2,t} - \|v_{i, 2,t}\|_2 \|\|w_{i, B}\|_2 x_{i, t} - v_{i, 2,t}\|_2 \\
\geq &~ \frac{1}{2D} - \tilde{O}(Dr\sigma) \geq \frac{2}{5D}.
\end{align*}
{\noindent \bf Case 1.} Suppose $(1 - \frac{\eta}{4D})^{t+2 - T_1} \leq \|U_{i, 2,t}v_{2,t}\|_2 \leq (1 - \frac{\eta}{4D})^{t - T_1}$, then
\begin{align*}
    \|U_{i, 2,t+1}v_{i, 2,t+1}\|_2 \leq &~ (1 - \frac{\eta}{3D})\|U_{i, 2,t}v_{i, 2,t}\|_2 + \eta\|w_{i, B}\|_2^2|x_{i, t}^{\top}v_{i, 2,t} -  1|\| U_{i, 2,t}x_{i, t} \|_2 \\
    \leq &~ (1- \frac{\eta}{4D})\|U_{i, 2,t}v_{i, 2,t}\|_2 \leq (1 - \frac{\eta}{4D})^{t+1- T_1},
\end{align*}
where the first step comes from Eq.~\eqref{eq:indc_5}, the second step comes from 
\begin{align*}
\eta\|w_{i, B}\|_2^2|x_{i, t}^{\top}v_{i, 2,t} -  1|\| U_{i, 2,t}x_{i, t} \|_2 \leq &~ \eta D^2 \cdot \frac{1}{2}(1- \frac{\eta}{4D})^{t - T_1} \cdot \tilde{O}(Dd\sigma)\\
\leq &~ \frac{\eta}{12D}(1 - \frac{\eta}{4D})^{t+2- T_1} \leq \frac{\eta}{12D}\|U_{i, 2,t}v_{i, 2,t}\|_2.
\end{align*}

{\noindent \bf Case 2.} Suppose $\|U_tv_{2,t}\|_2 \leq (1 - \frac{\eta}{4D})^{t+2 - T_1}$, then
\begin{align*}
    \|U_{i, 2,t+1}v_{i, 2,t+1}\|_2 \leq &~ \|U_{i, 2,t}v_{i, 2,t}\|_2 + \eta\|w_{i, B}\|_2^2\cdot|x_{i, t}^{\top}v_{i, 2,t} -  1|\cdot\|U_{i, 2,t}x_{i, t} \|_2 \\
    \leq &~ (1 - \frac{\eta}{4D})^{t+2 - T_1} + \frac{1}{2}\eta D^2 (1 - \frac{\eta}{4D})^{t - T_1}\cdot \tilde{O}(Dd\sigma)\\
    \leq &~ (1 - \frac{\eta}{4D})^{t+2 - T_1} + (1 - \frac{\eta}{4D})^{t+1- T_1} \cdot \frac{\eta}{4D}\\
    = &~ (1 - \frac{\eta}{4D})^{t+1- T_1},
\end{align*}
where the first step comes from Eq.~\eqref{eq:indc_5}, the second step follows from the induction hypothesis and $\|U_{i, 2,t}x_{i, t}\|_2^2 \leq \tilde{O}(Dd\sigma)$. We have proved the second claim.

Now we move to the third claim. One has
\begin{align*}
    &~ \|\|w_{i, B}\|_2x_{i, t + 1} - v_{i, 2, t + 1}\|_2^2 - \|\|w_{i, B}\|_2 x_{i, t} - v_{i, 2, t}\|_2^2\\
    = &~ 2\eta(x_{i, t}^{\top}v_{i, 2,t} - 1)  \|w_{i, B}\|_2\|\|w_{i, B}\|_2x_{i, t} - v_{i, 2,t}\|_2^2 +\eta\langle \|w_{i, B}\|_2 x_{i, t} - v_{i, 2,t}, U_{i, 2,t}^{\top}U_{i, 2,t}v_{i, 2,t}\rangle \pm O(\eta^2 D^4)\\
    \lesssim &~ 2\eta\cdot  d^2D^3\sigma^2 \cdot D \cdot (r\sigma)^2 + \eta \cdot r\sigma \cdot d^2 \sigma^2D + \eta^2 D^4\\
    \lesssim &~ \eta D d^2r\sigma^3.
\end{align*}
The first step comes from Eq.~\eqref{eq:indc_mid}, the second step follows from 
\begin{align*}
     \|w_{i, B}\|_2\leq D, \quad \|\|w_{i, B}\|_2x_{i, t} - v_{i, 2,t}\|_2\leq \tilde{O}(r\sigma), \quad \|U_{i, 2,t}^{\top}U_{i, 2,t}v_{i, 2,t}\|_2 \leq \tilde{O}(d^2\sigma^2 D)
\end{align*}
and 
\begin{align*}
      x_{i, t}^{\top}v_{i, 2,t} - 1 \leq  \tilde{O}(d^2D^3\sigma^2).
\end{align*}
Here the last term holds since (i) $|x_{i, \tau+1}^{\top}v_{i, 2,\tau+1} - x_{i, \tau}^{\top}v_{i, 2,\tau}| \leq O(\eta D)$, i.e., the step size is at most $\eta D$ (see Eq. \eqref{eq:grad_xv} \eqref{eq:smooth}); (ii) $x_{i, T_1}^{\top}v_{i, 2, T_1} < 1$ and (iii) $|x_{i, \tau+1}^{\top}v_{i, 2,\tau+1} - 1| < |x_{i, \tau}^{\top}v_{i, 2,\tau} - 1|$ whenever
\[
\eta \|x_{i, t}^{\top}U_{i, 2,t}^{\top} U_{i, 2,t}v_2\|_2 \leq \eta \cdot \tilde{O}(d^2D^2\sigma^2) \lesssim \frac{\eta}{2D} |x_{i, \tau}^{\top}v_{i, 2,\tau} - 1|  \quad \Rightarrow \quad |x_{i, \tau}^{\top}v_{i, 2,\tau} - 1| \gtrsim d^2D^3\sigma^2.
\]
That is, combining (i) (ii), we know that the first time $x_{i, \tau}^{\top}v_{i, \tau}$ being greater $1$ must obey $x_{i, \tau}^{\top}v_{i, \tau} < 1 + O(\eta D)$, (iii) implies that whenever $x_{i, \tau+1}^{\top}v_{i, 2,\tau+1} - 1 \gtrsim d^2D^3\sigma^2$, it value should decrease, hence we conclude 
\[
x_{i, T_1}^{\top}v_{i, 2, T_1} - 1 \lesssim \eta D + d^2 D^3 \sigma^2 \lesssim d^2 D^3 \sigma^2.
\]

Taking a telescopic summation, one has
\begin{align*}
\|\|w_{i, B}\|_2x_{i, t} - v_{i, 2, t}\|_2^2 - \|\|w_{i, B}\|_2 x_{i, T_1} - v_{i, 2, T_1}\|_2^2 \leq &~  (t - T_1)\cdot O(\eta D d^2r^2\sigma^3)\\
\leq &~  \tilde{O}(D^2d^2r \sigma^3) \leq r^2\sigma^2.
\end{align*}
This concludes the third claim. We conclude the proof here.
\end{proof}

\subsection{Missing proof from Section \ref{sec:induction}}

\begin{proof}[Proof of Theorem \ref{thm:linear}]
Due to the reduction established in Section \ref{sec:reduction}, it suffices to prove Eq.~\eqref{eq:prob-online} and Eq.~\eqref{eq:prob-online1}.
For each environment $i$ ($i \in [k]$), we inductively prove
\begin{enumerate}
    \item $\DPGD$ achieves good accuracy on the current environment, i.e., $\|U_{i, T}v_i - w_i\|_2 \leq \eps \nu$;
    \item The feature matrix $U_{i}$ remains well conditioned, i.e. $\frac{1}{2\sqrt{D}} \leq \sigma_{\min}(U_{i, \mathsf{end}}) \leq\sigma_{\max}(U_{i, \mathsf{end}}) \leq 2\sqrt{D}$.
    \item The algorithm does not suffer from catastrophic forgetting, i.e., $\|U_{i, t}v_j - w_i\|_2 \leq \eps$ for any $j < i$ and $t\in [T]$;
\end{enumerate}

The base case ($i=0$) holds trivially as at the beginning of CL, we have $\W, \V = \emptyset$ and $U = 0$. 
Suppose the induction holds up to the $(i - 1)$-th environment, we focus on the second and last claim, as the first claim holds directly due to Lemma \ref{lem:convergence}. 

For the second claim, we have already proved $\|U_{i, T}v_i - w_i\|_2 \leq \eps\nu$, this indicates that each coordinate of $U_{i, T}v_i - w_i$ is less than $\nu/2$. Since we assume each coordinate of $w_i$ is a multiple of $\nu$, therefore, we have $\hat{w}_i = \mathsf{Round}_{\nu}(U_{i, T} v_i) = w_i$. That is, we exact recover $w_i$. We divide into two cases. 

{\noindent \bf Case 1. } If $\|w_{i, B}\|_2 = 0$, i.e., $w_{i} \in \W$, then $\|P_{\W_{\perp}}\hat{w}_i\|_2 = \|P_{\W_{\perp}}w_i\|_2 = 0$, Therefore, we do not update $\W$ and $\V$, and 
\[
U_{i, \mathsf{end}} = P_{\W}U_{i, T}P_{\V} = P_{\W}(U_{i, A, 0} + U_{i, B, T})P_{\V} = P_{\W}U_{i, A, 0}P_{\V} = U_{i-1, \mathsf{end}},
\]
where the second and the third step holds to Lemma \ref{lem:decomp} and the last step just holds due to definition. Hence $U_{i}$ continues to be well-conditioned (since it does not change).

{\noindent \bf Case 2. } If $\|w_{i, B}\|_2 \in [1/D, D]$, then $\|P_{\W_{\perp}}\hat{w}_i\|_2 = \|P_{\W_{\perp}}w_i\|_2 = \|w_{i,B}\| \geq 1/D$. Hence, we augment $\W_{i} = \W_{i-1} \cup \{w_i\}$ and $\V_{i} = \V_{i-1} \cup \{v_i\}$ and have
\begin{align}
U_{i, \mathsf{end}} = &~ P_{\W}U_{i, T}P_{\V} = P_{\W}(U_{i, A, 0} + U_{i, B, T})P_{\V}\notag\\
=&~ U_{i, A, 0} + (\frac{1}{\|w_{i, B}\|_2^2}w_{i, B} w_{i, B}^\top) U_{i, B, T} (\frac{1}{\|v_{i,2, T}\|_2^2}v_{i,2,T} v_{i,2, T}^\top)\notag\\
= &~ U_{i, A, 0} + (\frac{1}{\|w_B\|_2^2}w_{B} w_{B}^\top) (w_B x_{i, T}^{\top} + U_{i,2,T})  (\frac{1}{\|v_{i,2,T}\|_2^2}v_{i,2,T} v_{i,2,T}^\top)\notag \\
= &~ U_{i, A, 0} + w_Bv_{i,2,T}^{\top} \frac{ x_{i, T}^{\top} v_{i,2,T}}{\|v_{i,2,T}\|_2^2}\notag \\
= &~ U_{i, A, 0} + (1 \pm o(\eps/D))\frac{1}{\|v_{i,2,T}\|_2^2} w_Bv_{i,2,T}^{\top} \label{eq:condition1} .
\end{align}
The third step holds since $\row(U_{i, A, 0}) \in \V$, $\column(U_{i, A, 0}) \in \W$, $\column(U_{i, B, T}) \cap \W = w_{i,B}$, $\row(U_{i, B, T}) \cap \V = v_{i, 2, T}$ (see Lemma \ref{lem:decomp}), the later two imply the projection operation essentially boils to projection on $w_{i, B}$ and $v_{i, 2, T}$.
The fifth step follows from $\column(U_{i,2,T}) \perp w_B$ (see Lemma \ref{lem:decomp}), the sixth step follows from $x_{i, T}^{\top}v_{i, T} = 1 \pm o(\eps/D)$ (see Lemma \ref{lem:stage_two}). To bound the condition number, it suffices to note that $w_{i, B} \perp \W_{i-1}$, $v_{i, 2, T} \perp \V_{i-1}$ (see Lemma \ref{lem:decomp}), and therefore, $w_{B}\perp \column(U_{i, A, 0})$, $v_{i,2,T}\perp \row(U_{i, A, 0})$ (i.e., we add an orthogonal basis) and 
\[
(1 \pm o(\eps/D))\frac{1}{\|v_{i,2,T}\|_2^2} \|w_B\|_2 \|v_{i,2,T}^{\top}\|_2 = (1 \pm o(\eps/D))\frac{\|w_B\|_2}{\|v_{i,2,T}\|_2}  = (1+o(1))\sqrt{\|w_B\|_2} \in \left[\frac{1}{2\sqrt{D}}, \frac{\sqrt{D}}{2}\right]
\]
where the last step is derived from $x_{i,t}^{\top}v_{i,2,T} \approx 1 + o(1)$ and $\|\|w_B\|_2x_{i, t}- v_{i, 2,t}\|_2 \approx 1 \pm o(1/D^3)$.
We have proved the second claim.

For the last claim, fix an index $j < i$, we prove the accuracy of $j$-th environment would not drop significantly and remain good. Note by inductive hypothesis, we already have $\|U_{j, T}v_j - w_j\|_2 \leq \eps\nu/kd$ before the final projection step of $j$-th environment. After the projection step, one has
\begin{align*}
    \|U_{j, \mathsf{end}}v_{j} - w_j\|_2 = \|P_{\W}U_{j, T}P_{\V}v_j - w_j\|_2 = \|P_{\W}(U_{j, A, T} + w_{j, B}x_{j, T}^{\top} + U_{j, 2,T})P_{\V}v_j - w_j\|_2
\end{align*}
We divide into two cases.

{\noindent \bf Case 1.} Suppose $\|w_{j, B}\|_2 = 0$. We have $\W_j = \W_{j-1}, \V_{j} = \V_{j-1}$ and
\begin{align*}
    \|U_{j, \mathsf{end}}v_j - w_j\|_2 = &~ \|P_{\W_j}(U_{j, A, T} + U_{j, 2,T})P_{\V_j}v_j - w_j\|_2 = \|U_{j, A, T} v_j - w_j\|_2 \\
    \leq &~ \|U_{j, A, T} v_{j, 1, T} - w_j\|_2 \leq \eps\nu.
\end{align*}
The second step follows from $\column(U_{j, A, T}) \in \W_{j}$, $\row(U_{j, A, T}) \in \V_{k}$ and $\row(U_{j, 2, T}) \in \V_{j,\perp}$ (see Lemma \ref{lem:decomp}), the third step follows from $\row(U_{j, A, T}) \in \V_{i-1}$. Hence, we have that the error remains small after the projection.

During the $i$-th environment, for any $t \in [T]$, we decompose $U_{i, t} = U_{j, \mathsf{end}} + \hat{U}_{i, t}$.
We have
\begin{align*}
    \|U_{i, t}v_j - w_j\|_2 = &~ \|(U_{j, \mathsf{end}} + \hat{U}_{i,t})v_j - w_j\|_2 \\
    \leq &~ \|U_{j, \mathsf{end}}v_j - w_j\|_2 + \|\hat{U}_{i,t}v_{j}\|_2\\
    = &~ \|U_{j, \mathsf{end}}v_j - w_j\|_2 + \|\hat{U}_{i,t}v_{j, 2, T}\|_2\\
    \leq &~ \eps\nu + \tilde{O}(\sqrt{D}\cdot r\sigma)\\
    \leq &~ \eps.
\end{align*}

 The third step holds due to the fact that $\mathsf{row}(\hat{U}_{i, t}) \in V_{j, \perp}$, the fourth step holds due to (1)
 $\|v_{j, 2,t}\|_2$ is non-decreasing during the $j$-th environment (see the gradient update formula in Lemma \ref{lem:grad}) and therefore $\|v_{j, 2,T}\|_2 \leq \|v_{j, 2, 0}\|_{2} \leq \tilde{O}(r\sigma)$ w.h.p.; (2) the spectral norm $\|\hat{U}_{i, t}\| \leq O(\sqrt{D})$, since
 \begin{align*}
     \|\hat{U}_{i, t}\| \leq &~ \|U_{i,t}\| + \|U_{j, \mathsf{end}}\|_2
     \leq \|U_{i, A}\| + \|w_{i, B}x_{i, t}^{\top} + U_{i, 2, T}\| + \|U_{j, \mathsf{end}}\|_2\\
     \leq &~ 2\sqrt{D} + 2\sqrt{D} + 2\sqrt{D} = O(\sqrt{D}).
 \end{align*}
Here the first step and the second step hold due to triangle inequality, the second step holds due to the inductive hypothesis and $\|w_{i, B}x_{i, t}^{\top} + U_{i, 2, T}\|  \leq 2\sqrt{D}$. We finished the proof of the first case.

{\noindent\bf Case 2. } Suppose $\|w_{j,B}\|_2 \in [1/D, D]$. Then we augment $\W_{j} = \W_{j-1} \cup \{w_j\}$ and $\V_{j} = \V_{j-1} \cup \{v_j\}$. We first prove the loss remains small after the final projection step of $j$-th environment. In particular, we have
\begin{align*}
\|U_{j, \mathsf{end}}v_{j} - w_j\|_2 =&~ \|\big(U_{j, A, 0} + (1\pm o(\eps/D))\frac{1}{\|v_{j, 2, T}\|_2^2} w_{B} v_{j, 2, T}^{\top}\big) (v_{j, 1, T} + v_{j,2, T}) - w_{j, A} - w_{j, B} \|_2\\
= &~\|(U_{j, A, 0}v_{j, 1, T} - w_{j, A}) +  (1\pm o(\eps/D))\frac{1}{\|v_{j, 2, T}\|_2^2} w_{j, B} v_{j, 2, T}^{\top}v_{j, 2, T}  - w_{j, B} \|_2\\
\leq &~ \|(U_{j, A, 0}v_{j, 1, T} - w_{j, A})\|_2 + o(\eps/D)\|w_{j, B}\|_2\\
\leq &~ \eps\nu + o(\eps)
\leq  \eps.
\end{align*}
The first step holds due to Eq. \eqref{eq:condition1}, the third step holds due to triangle inequality, the fourth step holds due to the inductive hypothesis and $\|w_{j, B}\|_2\leq D$.

During the $i$-th environment, since the update is performed in the orthogonal space, we expect $Uv_{j}$ does not change. Formally, let $U_{i, t} = U_{j, \mathsf{end}} + \hat{U}_{i,t}$, where $\column(\hat{U}_{i, t}) \perp \W_j$ and $\row(\hat{U}_{i,t}) \perp \V_j$, then 
\[
U_{i, t}v_{j} = (U_{j, \mathsf{end}} + \hat{U}_{i,t})v_j = U_{j, \mathsf{end}}v_j,
\]
Hence $\|U_{i, t}v_{j} - w_j\|\leq \eps$ continues to hold. We conclude the proof here.

\end{proof}
\section{Missing proof from Section \ref{sec:lower}}
\label{sec:lower-app}

We provide the proof of a technical Lemma used in proving Theorem \ref{thm:lower}
\begin{lemma}[Technical tool]
\label{lem:tech}
Let $\Pi_{n}^{d}$ be the space of all polynomial of degree at most $d$ in $n$ variables. For any two polynomials $p_1(x), p_2(x) \in \Pi_{3}^{2}$, if
\[
\E_{x\sim \mathcal{B}_3(0, 1)}[(p_1(x) - p_2(x))^2] \leq \frac{1}{1000},
\] 
then the absolute deviation of each coefficient is at most $1/4$.
\end{lemma}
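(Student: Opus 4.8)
The idea is to exploit that $\Pi_3^2$ is finite dimensional (dimension $\binom{5}{2}=10$), so the $L^2(\mathcal B_3(0,1))$ norm and the coefficient norm are equivalent; the lemma is a quantitative form of this equivalence. Write $q:=p_1-p_2=\sum_{|\alpha|\le 2}c_\alpha x^\alpha$ with coefficient vector $c\in\R^{10}$ indexed by the monomials $1,x_1,x_2,x_3,x_1^2,x_2^2,x_3^2,x_1x_2,x_1x_3,x_2x_3$, and let $M\in\R^{10\times 10}$ be the Gram matrix of these monomials under $\langle f,g\rangle:=\E_{x\sim\mathcal B_3(0,1)}[f(x)g(x)]$, so that $\E[(p_1-p_2)^2]=\|q\|^2=c^\top M c\ge\lambda_{\min}(M)\,\|c\|_2^2$. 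Since $|c_\alpha|\le\|c\|_2\le\|q\|/\sqrt{\lambda_{\min}(M)}$, it suffices to prove $\lambda_{\min}(M)\ge\tfrac1{50}$: then $|c_\alpha|\le\sqrt{(1/1000)/(1/50)}=1/\sqrt{20}<1/4$.

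To compute $M$ I would use the radial--angular factorization $x\overset{d}{=}RU$ of the uniform law on $\mathcal B_3(0,1)$, with $U$ uniform on $S^2$ and $R$ independent with density $3r^2$ on $[0,1]$: then $\E[R^2]=3/5$, $\E[R^4]=3/7$, and on $S^2$ one has $\E[U_i^2]=1/3$, $\E[U_i^4]=1/5$, $\E[U_i^2U_j^2]=1/15$ for $i\ne j$, while every moment of odd degree vanishes. Hence $\E[x_i^2]=1/5$, $\E[x_i^4]=3/35$, $\E[x_i^2x_j^2]=1/35$, and all moments of odd total degree (in particular every $\E[x_ix_j]$ with $i\ne j$, and every triple product) are zero. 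By the coordinate-sign and coordinate-permutation symmetries of $\mathcal B_3(0,1)$, $M$ is block diagonal: a block $\tfrac15 I_3$ on $\{x_1,x_2,x_3\}$, a block $\tfrac1{35}I_3$ on $\{x_1x_2,x_1x_3,x_2x_3\}$, and a $4\times4$ block $M_0$ on $\{1,x_1^2,x_2^2,x_3^2\}$ with $(M_0)_{00}=1$, $(M_0)_{0i}=1/5$, $(M_0)_{ii}=3/35$, $(M_0)_{ij}=1/35$.

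The only nontrivial step is estimating $\lambda_{\min}(M_0)$, which dominates the other blocks. Using the residual $S_3$ symmetry, $M_0$ is itself block diagonal in the orthonormal basis $\{1,(x_1^2+x_2^2+x_3^2)/\sqrt3\}$ of the symmetric part together with its orthogonal complement $\mathrm{span}\{x_i^2-x_j^2\}$: on the complement $M_0$ acts as the scalar $2/35$, and on the symmetric $2$-dimensional part it is $\left(\begin{smallmatrix}1 & \sqrt3/5\\ \sqrt3/5 & 1/7\end{smallmatrix}\right)$, with trace $8/7$ and determinant $\tfrac17-\tfrac3{25}=\tfrac4{175}$. Since the smaller eigenvalue of a $2\times2$ positive semidefinite matrix is at least $\det/\mathrm{tr}$, this eigenvalue is $\ge\tfrac{4/175}{8/7}=\tfrac1{50}$, so $\lambda_{\min}(M)=\min\{1/5,\,1/35,\,2/35,\,\lambda_{\min}(M_0)\}\ge\tfrac1{50}$, which closes the argument via the first paragraph. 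The concrete eigenvalue bound on $M_0$ is the main (and essentially only) piece of real work; an alternative that avoids diagonalizing $M_0$ is to write down the dual basis $\{g_\alpha\}\subset\Pi_3^2$ with $\langle g_\alpha,x^\beta\rangle=\delta_{\alpha\beta}$, observe $c_\alpha=\langle g_\alpha,q\rangle$, and bound $|c_\alpha|\le\|g_\alpha\|\,\|q\|$ by computing $\|g_\alpha\|^2=(M^{-1})_{\alpha\alpha}$ in the few cases that symmetry leaves.
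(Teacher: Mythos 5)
Your proof is correct, and it takes a genuinely different (and more careful) route than the paper's. The paper argues coefficient by coefficient: assuming some coefficient of $p_1-p_2$ exceeds $1/4$ in absolute value, it lower-bounds $\E[(p_1-p_2)^2]$ by the single diagonal term $\tfrac{1}{16}\E[x^{2\alpha}]$ for that monomial, invoking the vanishing of odd moments to discard cross terms. That disposal of cross terms is only fully justified for monomials of differing parity; the even-degree block $\{1,x_1^2,x_2^2,x_3^2\}$ has nonzero off-diagonal inner products ($\E[x_i^2]=1/5$, $\E[x_i^2x_j^2]=1/35$), so cancellation among those four coefficients is not addressed by the paper's case analysis. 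Your Gram-matrix argument handles exactly this: after the same moment computations ($\E[x_i^2]=1/5$, $\E[x_i^4]=3/35$, $\E[x_i^2x_j^2]=1/35$, all correct), you diagonalize $M$ into the blocks $\tfrac15 I_3$, $\tfrac1{35}I_3$, and $M_0$, and your $S_3$-reduction of $M_0$ to the scalar $2/35$ plus the $2\times 2$ matrix with trace $8/7$ and determinant $4/175$ is right, giving $\lambda_{\min}(M)\ge \det/\mathrm{tr}=1/50$ and hence $\|c\|_2\le 1/\sqrt{20}<1/4$. So your proof buys rigor where the paper's is loose, and yields the stronger uniform $\ell_2$ bound on the whole coefficient vector; the paper's version is shorter but implicitly relies on the favorable spectrum of $M_0$ that you compute explicitly.
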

\begin{proof}
Let $p(x) = (p_1(x) - p_2(x))^2$, taking an integral over $B_{3}(0,1)$, we can only need to consider all quadratic terms, since all odd terms would be canceled due to symmetry. We divide into cases. (1) The coefficient of the constant term is greater than $1/4$, then $p(x) \geq 1/16$. (2) The coefficient of $x_1$ is greater than $1/4$, then $p(x) \geq \E_{x\sim \mathcal{B}_3(0, 1)} \tfrac{1}{16} x_1^2 = \frac{1}{16} \cdot \frac{1}{5} = \frac{1}{80}$. (3) The coefficient of $x_1x_2$ is greater than $1/4$, then then $p(x) \geq \E_{x\sim \mathcal{B}_3(0, 1)} \tfrac{1}{16} x_1^2x_2^2 = \frac{1}{16} \cdot \frac{1}{35} = \frac{1}{560}$. (4) The coefficient of $x_1^2$ is greater than $1/4$, then then $p(x) \geq \E_{x\sim \mathcal{B}_3(0, 1)} \tfrac{1}{16} x_1^4 = \frac{1}{16}\cdot \frac{3}{35} = \frac{3}{560}$. Hence we conclude no coefficient has difference greater than $1/4$.
\end{proof}

\end{document}